\documentclass{article} % For LaTeX2e

\usepackage[accepted]{icml2020}

% Optional math commands from https://github.com/goodfeli/dlbook_notation.
%\input{math_commands.tex}

\usepackage[hidelinks]{hyperref}
\usepackage{url}

\usepackage[utf8]{inputenc} % allow utf-8 input
\usepackage[T1]{fontenc}    % use 8-bit T1 fonts
\usepackage{hyperref}       % hyperlinks
\usepackage{url}            % simple URL typesetting
\usepackage{booktabs}       % professional-quality tables
\usepackage{amsfonts}       % blackboard math symbols
\usepackage{nicefrac}       % compact symbols for 1/2, etc.
\usepackage{microtype}      % microtypography
%%%%% NEW MATH DEFINITIONS %%%%%

\usepackage{amsmath,amsfonts,bm}

% Mark sections of captions for referring to divisions of figures
%\newcommand{\figleft}{{\em (Left)}}
%\newcommand{\figcenter}{{\em (Center)}}
%\newcommand{\figright}{{\em (Right)}}
%\newcommand{\figtop}{{\em (Top)}}
%\newcommand{\figbottom}{{\em (Bottom)}}
%\newcommand{\captiona}{{\em (a)}}
%\newcommand{\captionb}{{\em (b)}}
%\newcommand{\captionc}{{\em (c)}}
%\newcommand{\captiond}{{\em (d)}}

% Highlight a newly defined term

% Figure reference, lower-case.

% Figure reference, capital. For start of sentence

% Section reference, lower-case.

% Section reference, capital.

% Reference to two sections.

% Reference to three sections.

% Reference to an equation, lower-case.
\def\eqref#1{equation~\ref{#1}}
% Reference to an equation, upper case

% A raw reference to an equation---avoid using if possible

% Reference to a chapter, lower-case.

% Reference to an equation, upper case.

% Reference to a range of chapters

% Reference to an algorithm, lower-case.

% Reference to an algorithm, upper case.

% Reference to a part, lower case

% Reference to a part, upper case

\def\1{\bm{1}}

\def\eps{{\epsilon}}

% Random variables

% rm is already a command, just don't name any random variables m

% Random vectors

% Elements of random vectors

% Random matrices

% Elements of random matrices

% Vectors

% Elements of vectors

% Matrix

% Tensor
\DeclareMathAlphabet{\mathsfit}{\encodingdefault}{\sfdefault}{m}{sl}
\SetMathAlphabet{\mathsfit}{bold}{\encodingdefault}{\sfdefault}{bx}{n}

% Graph

% Sets

% Don't use a set called E, because this would be the same as our symbol
% for expectation.

% Entries of a matrix

% entries of a tensor
% Same font as tensor, without \bm wrapper

% The true underlying data generating distribution

% The empirical distribution defined by the training set

% The model distribution

% Stochastic autoencoder distributions

 % Laplace distribution

% Wolfram Mathworld says $L^2$ is for function spaces and $\ell^2$ is for vectors
% But then they seem to use $L^2$ for vectors throughout the site, and so does
% wikipedia.

 % See usage in notation.tex. Chosen to match Daphne's book.

\DeclareMathOperator*{\argmin}{arg\,min}

\usepackage{soul}
\usepackage{graphicx}
\usepackage{comment}
\usepackage[flushleft]{threeparttable}

\usepackage[algo2e,ruled,vlined]{algorithm2e}

\SetCommentSty{mycommfont}
\usepackage{mathtools}
\usepackage{amsthm}
\usepackage{booktabs}
\usepackage{amsmath,amssymb}
\usepackage{amsfonts}
\usepackage{booktabs}
\usepackage{subcaption}
\usepackage{siunitx}
\usepackage[textsize=tiny,textwidth=8em,backgroundcolor=yellow]{todonotes} 
	
\usepackage{wrapfig}
\usepackage{colortbl}
\definecolor{Lightgray}{RGB}{235,235,235}

\usepackage{arydshln}
\usepackage{wrapfig}
\usepackage{tabularx}

\DeclarePairedDelimiterX{\infdivx}[2]{(}{)}{%
  #1\;\delimsize\|\;#2%
}

\newtheorem{theorem}{Theorem}
\newtheorem{lemma}[theorem]{Lemma}
\newtheorem*{lemma*}{Lemma}
\newtheorem{claim}[theorem]{Claim}

   %% And a not so common one.
\newtheorem{definition}[theorem]{Definition}

\definecolor{c1}{RGB}{239,71,111}
\definecolor{c2}{RGB}{250,131,52}

\DeclareMathOperator*{\Exp}{\mathbb{E}}

\usepackage{enumitem}

\newenvironment{myenumerate}
{\begin{enumerate}[leftmargin=5.5mm, itemsep=1pt, topsep=2pt]}
	{\end{enumerate}}

\icmltitlerunning{Fundamental Tradeoffs between Invariance and Sensitivity to Adversarial Perturbations}

\begin{document}

\twocolumn[
\icmltitle{Fundamental Tradeoffs between Invariance and \\ Sensitivity to Adversarial Perturbations}

\icmlsetsymbol{equal}{*}

\begin{icmlauthorlist}
\icmlauthor{Florian Tramèr}{stanford}
\icmlauthor{Jens Behrmann}{bremen}
\icmlauthor{Nicholas Carlini}{google}
\icmlauthor{Nicolas Papernot}{google}
\icmlauthor{Jörn-Henrik Jacobsen}{vector}
\end{icmlauthorlist}

\icmlaffiliation{stanford}{Stanford University}
\icmlaffiliation{google}{Google Brain}
\icmlaffiliation{vector}{Vector Institute and University of Toronto}
\icmlaffiliation{bremen}{University of Bremen}

\icmlcorrespondingauthor{Florian Tramèr}{tramer@cs.stanford.edu}
%\icmlcorrespondingauthor{Eee Pppp}{ep@eden.co.uk}

% You may provide any keywords that you
% find helpful for describing your paper; these are used to populate
% the "keywords" metadata in the PDF but will not be shown in the document
\icmlkeywords{Machine Learning, ICML}

\vskip 0.3in
]

\printAffiliationsAndNotice{} 
\begin{abstract}

Adversarial examples are malicious inputs crafted to induce misclassification. Commonly studied \emph{sensitivity-based} adversarial examples introduce semantically-small changes to an input that result in a different model prediction. 
This paper studies a complementary failure mode, \emph{invariance-based} adversarial examples, that introduce minimal semantic changes that modify an input's true label yet preserve the model's prediction.
We demonstrate fundamental tradeoffs between these two types of adversarial examples. 
We show that defenses against sensitivity-based attacks 
actively harm a model's accuracy on invariance-based attacks, and that new approaches are needed to resist both attack types.
In particular, we break state-of-the-art adversarially-trained and \emph{certifiably-robust} models by generating small perturbations that the models are (provably) robust to, yet that change an input's class according to human labelers.
Finally, we formally show that the existence of excessively invariant classifiers arises from the presence of \emph{overly-robust} predictive features in standard datasets. 
\end{abstract}

\section{Introduction}

Research on adversarial examples~\citep{szegedy2013intriguing,biggio2013evasion} is motivated by a spectrum of questions, ranging from the security of models in adversarial settings~\citep{tramer2019ads}, to limitations of learned representations under natural distribution shift~\citep{GilmerMotivating2018}. 
The broadest accepted definition of an adversarial example is ``an input to a ML model that is intentionally designed by an attacker to fool the model into producing an incorrect output''~\citep{goodfellow2017attacking}.

Adversarial examples are commonly formalized as inputs obtained by adding some perturbation to test examples to change the model output.
We refer to this class of malicious inputs as \textit{sensitivity-based adversarial examples}. 

To enable concrete progress, the adversary's capabilities are typically constrained by bounding the size of the perturbation added to the original input. The goal of this constraint is to ensure that semantics of the input are left unaffected by the perturbation. In the computer vision domain, $\ell_p$ norms have grown to be a default metric to measure semantic similarity. This led to a series of proposals for increasing the robustness of models to sensitivity-based adversaries that operate within the constraints of an $\ell_p$ ball~\cite{madry2017towards, kolter2017provable,raghunathan2018certified}. 

In this paper, we show that optimizing a model's robustness to $\ell_p$-bounded perturbations
% uniform continuity 
is not only insufficient to resist general adversarial examples, but also potentially \textit{harmful}. As $\ell_p$ distances are a crude approximation to the visual similarity in a given task, over-optimizing a model's robustness to $\ell_p$-bounded perturbations renders the model excessively invariant to real semantics of the underlying task.

Excessive invariance of a model causes vulnerability against \emph{invariance adversarial examples}~\citep{jacobsen2018excessive}. 
These are perturbations that change the human-assigned label of a given input
but keep the model prediction unchanged.
For example in Figure~\ref{fig:myfig} an MNIST image of a digit `3' is perturbed
to be an image of a `5' by changing only 20 pixels; models that are excessively invariant
do not change their decision and incorrectly label both images as a `3', despite the
fact that the oracle label has changed.

\begin{figure*}[t]
\centering
\includegraphics[width=.7\textwidth]{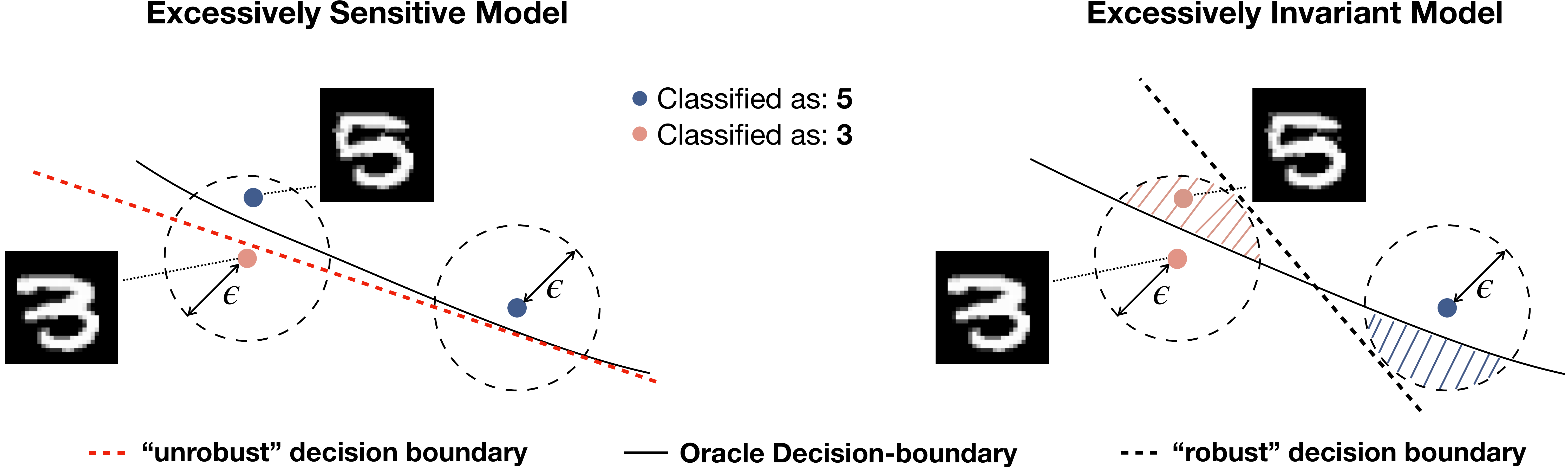}
\vspace{-0.75em}
\caption{
Decision boundaries near a real image of a digit `3' and an invariance-based adversarial example labeled as `5'. [Left]: Training a classifier without constraints may learn a decision boundary unrobust to sensitivity-based adversarial examples. [Right]: Enforcing robustness to norm-bounded perturbations introduces erroneous invariance (dashed regions in $\varepsilon$-spheres). We display real data here, the misclassified `5' is an image found by our attack which resides within a typically reported $\varepsilon$-region around the displayed `3' (in the $\ell_0$ norm). This excessive invariance of the robust model in task-relevant directions illustrates how robustness to sensitivity-based adversarial examples can result in new model vulnerabilities.}
\label{fig:myfig}
\vspace{-0.75em}
\end{figure*}

This paper exposes a fundamental tradeoff between sensitivity-based and invariance-based adversarial examples. 
We show that due to a misalignment between formal robustness notions (e.g., $\ell_p$-balls) and a task's perceptual metric, current defenses against adversarial examples cannot prevent both sensitivity-based and invariance-based attacks, and must trade-off robustness to each (see Figure~\ref{fig:alignment}). Worse, we find that increasing robustness to sensitivity-based attacks \textbf{decreases} a model's robustness to invariance-based attacks. 
We introduce new algorithms to craft
$\ell_p$-bounded invariance-based adversarial examples, and illustrate the above tradeoff on MNIST.\footnote{While MNIST can be a poor choice for studying adversarial examples, we chose it because it is the \textit{only} vision task for which models have been made robust in non-negligible $\ell_p$ norm balls. The fundamental tradeoff described in this paper will affect other vision tasks once we can train strongly robust models on them.}
We show that state-of-the-art robust models disagree with human labelers on many of our crafted invariance-based examples, and that the disagreement rate is higher the more robust a model is. We find that even models robust to very small perturbations (e.g., of $\ell_\infty$-norm below $\varepsilon=0.1$) have higher vulnerability to invariance attacks compared to undefended models.

We further break a \textit{provably-robust} defense~\citep{zhang2019towards} with our attack. This model is certified to have 87\% test-accuracy (with respect to the MNIST test-labels) under $\ell_\infty$-perturbations of radius $\varepsilon=0.4$. 
That is, for $87\%$ of test inputs $(x, y)$, the model is guaranteed to predict class $y$ for any perturbed input $x'$ that satisfies $\|x-x'\|_\infty \leq 0.4$.
Yet, 
on our invariance-based adversarial examples that satisfy this norm-bound, the model only agrees with human labelers in $60\%$ of the cases for an automated attack, and $12\%$ of the cases for manually-created examples---i.e., no better than chance. The reason is that we can find perturbed inputs $x'$ that humans no longer classify the same way as $x$.

Code to reproduce our attacks is available at {\footnotesize \url{https://github.com/ftramer/Excessive-Invariance}}.

Finally, we introduce a classification task where the tradeoff between sensitivity and invariance can be studied rigorously. We show that excessive sensitivity and invariance can be tied respectively to the existence of generalizable non-robust features \citep{jo2017measuring,ilyas2019adversarial,yin2019fourier} and to \emph{robust features} that are predictive for standard datasets, but not for the general vision tasks that these datasets aim to capture. Our experiments on MNIST show that such \emph{overly-robust} features exist. We further argue both formally and empirically that data augmentation may offer a solution to both excessive sensitivity and invariance.

\section{Norm-bounded Sensitivity and Invariance Adversarial examples}
\label{sec:definitions}

We begin by defining a framework to formally describe two complementary failure modes of machine learning models, namely (norm-bounded) adversarial examples that arise from excessive \emph{sensitivity} or \emph{invariance} of a classifier.

We consider a classification task with data $(x, y) \in \mathbb{R}^d \times \{1, \dots, C\}$ from a distribution $\mathcal{D}$. We assume the existence of a \emph{labeling oracle} $\mathcal{O} : \mathbb{R}^d \to \{1,\dots, C\} \cup \{\bot\}$ that maps any input in $\mathbb{R}^d$ to its true label, or to the ``garbage class'' $\bot$ for inputs $x$ considered ``un-labelable'' (e.g., for a digit classification task, the oracle $\mathcal{O}$ corresponds to human-labeling of any image as a digit or as the garbage class). Note that for $(x, y) \sim \mathcal{D}$, we always have $y= \mathcal{O}(x)$.\footnote{We view the support of $\mathcal{D}$ as a strict subset of all inputs in $\mathbb{R}^d$ to which the oracle assigns a label. That is, there are inputs for which humans agree on a label, yet that have measure zero in the data distribution from which the classifier's train and test inputs are chosen. For example, the train-test data is often a sanitized and normalized subset of natural inputs. Moreover, ``unnatural'' inputs such as adversarial examples might never arise in natural data.}

The goal of robust classification is to learn a classifier $f: \mathbb{R}^d \to \{1,\dots, C\}$ that agrees with the oracle's labels not only in expectation over the distribution $\mathcal{D}$, but also on any rare or out-of-distribution inputs to which the oracle assigns a class label---including adversarial examples obtained by imperceptibly perturbing inputs sampled from $\mathcal{D}$.

At its broadest, the definition of an adversarial example encompasses any adversarially induced failure in a classifier~\cite{goodfellow2017attacking}. That is, an adversarial example is any input $x^*$ created such that $f(x^*) \neq \mathcal{O}(x^*)$. This definition has proven difficult to work with, due to its inherent reliance on the oracle $\mathcal{O}$. As a result, it has become customary to study a relaxation of this definition, which restricts the adversary to applying a ``small'' perturbation to an input $x$ sampled from the distribution $\mathcal{D}$. A common choice is to restrict the adversary to small perturbations under some $\ell_p$-norm. We call these ``sensitivity adversarial examples'':
\begin{definition}[Sensitivity Adversarial Examples]
\label{def:advExam}
%Let $\|\cdot\|$ be a norm on $\mathbb{R}^d$ and $\varepsilon > 0$.
Given a classifier $f$ and a correctly classified input $(x, y) \sim \mathcal{D}$ (i.e., $\mathcal{O}(x) = f(x) = y$), an $\varepsilon$-bounded sensitivity adversarial example is an input $x^* \in \mathbb{R}^d$ such that:
\begin{myenumerate}
	\item $f(x^*) \neq f(x)$.
	\item $\|x^* - x\| \leq \varepsilon$.
\end{myenumerate} 
\end{definition}
The assumption underlying this definition is that perturbations satisfying $\|x^* - x\| \leq \varepsilon$ preserve the oracle's labeling of the original input $x$, i.e., $\mathcal{O}(x^*) = \mathcal{O}(x)$. 

A long line of work studies techniques to make classifiers robust to norm-bounded sensitivity adversarial examples~\cite{goodfellow2014explaining, madry2017towards}. The main objective is to minimize a classifier's \emph{robust error} under $\varepsilon$-bounded perturbations, which is defined as:
\begin{equation}
\label{eq:robust_err}
\mathcal{L}_\varepsilon(f) = \Exp\limits_{(x, y)\sim \mathcal{D}} \Big[\max_{\|\Delta\| \leq \varepsilon} \{f(x+\Delta) \neq y\}\Big].
\end{equation}
%
%In this paper, it will be useful to contrast this definition with the ``true'' oracle-defined notion of robust error:
%
%\begin{equation}
%\label{eq:robust_err_oracle}
%\mathcal{L}^{\mathcal{O}}_\varepsilon(f) %=\hspace{-0.6em} \Exp\limits_{(x, y)\sim %\mathcal{D}} \Big[\max_{\|\Delta\| \leq %\varepsilon} \{f(x+\Delta) \neq %\mathcal{O}(x+\Delta)\}\Big].
%\end{equation}
%
%For a given $\varepsilon$, the two measures are identical if and only if the underlying assumption of sensitivity adversarial examples is met, i.e., $\mathcal{O}(x+\Delta) = \mathcal{O}(x)$ whenever
%$\|\Delta\| \leq \varepsilon$.
%
We study a complementary failure mode to sensitivity adversarial examples, called invariance adversarial examples~\citep{jacobsen2018excessive}. These correspond to (bounded) perturbations that 
\emph{do not} preserve an input's oracle-assigned label, yet preserve the model's classification:
\begin{definition}[Invariance Adversarial Examples]
	\label{def:advExamInv}
	Given a classifier $f$ and a correctly classified input $(x, y) \sim \mathcal{D}$, an $\varepsilon$-bounded invariance adversarial example is an input $x^* \in \mathbb{R}^d$ such that:
	\begin{myenumerate}
		\item $f(x^*) = f(x)$.
		\item $\mathcal{O}(x^*) \neq \mathcal{O}(x), \text{ and } \mathcal{O}(x^*) \neq \bot$.
		\item $\|x^* - x\| \leq \varepsilon$.
	\end{myenumerate} 
\end{definition}
If the assumption on sensitivity adversarial examples in Definition~\ref{def:advExam} is met---i.e., all $\varepsilon$-bounded perturbations preserve the label---then Definition~\ref{def:advExam} and Definition~\ref{def:advExamInv} correspond to well-separated failure modes of a classifier (i.e., 
$\varepsilon'$-bounded invariance adversarial examples only exist for $\varepsilon' > \varepsilon$).

Our main contribution is to reveal fundamental trade-offs between these two types of adversarial examples, that arise from this assumption being violated. 
We demonstrate that state-of-the-art robust classifiers do violate this assumption, and (sometimes certifiably) have low robust error $\mathcal{L}_\varepsilon(f)$ for a norm-bound $\varepsilon$ that does not guarantee that the oracle's label is preserved. We show that these classifiers actually have high ``true'' robust error as measured by human labelers. 

\vspace{-0.5em}
\paragraph{Remarks.}
Definition~\ref{def:advExamInv} is a conscious restriction on a definition of~\citet{jacobsen2018excessive}, who define an invariance adversarial example as an \emph{unbounded} perturbation that changes the oracle's label while preserving a classifier's output at \emph{an intermediate feature layer}. As we solely consider the model's final classification, considering unbounded perturbations would allow for a ``trivial'' attack: given an input $x$ of class $y$, find any input of a different class that the model misclassifies as $y$. (e.g., given an image of a digit $8$, an unbounded invariance example could be any unperturbed digit that the classifier happens to misclassify as an $8$).

Definition~\ref{def:advExamInv} presents the same difficulty as the original broad definition of adversarial examples: a dependence on the oracle $\mathcal{O}$.
Automating the process of finding invariance adversarial examples is thus challenging. In Section~\ref{sec:eval}, we present some successful automated attacks, but show that a human-in-the-loop process is more effective.

\begin{figure}[t]
    \centering
    \includegraphics[width=\columnwidth]{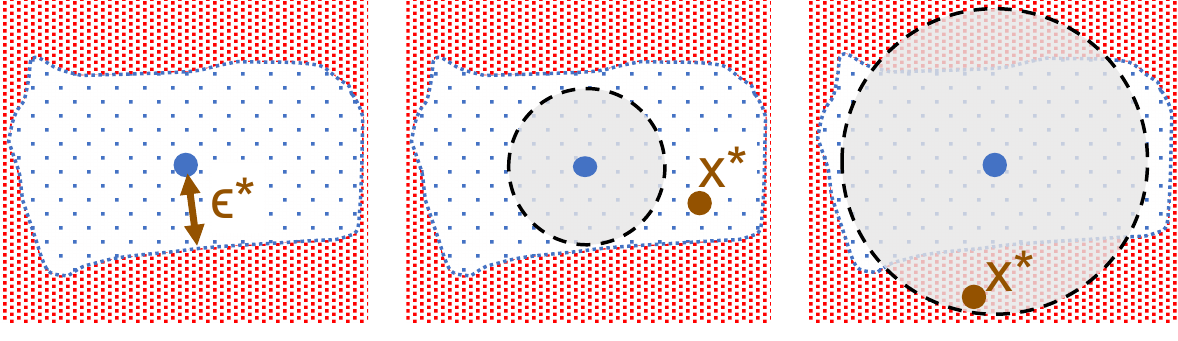}
    \vspace{-0.25em}
    (a) \hspace{0.27\columnwidth} (b) \hspace{0.27\columnwidth} (c) 
    \vspace{-0.25em}
    \caption{Illustration of distance-oracle misalignment. The input space is (ground-truth) classified into the red solid region, and the white dotted region. (a) A point at distance $\varepsilon^*$ (under a chosen norm) of the oracle decision boundary. (b) A model robust to perturbations of norm $\varepsilon \leq \varepsilon^*$ (gray circle) is still overly sensitive and can have adversarial examples $x^*$. (c) A model robust to perturbations of norm $\varepsilon > \varepsilon^*$ (gray circle) has invariance adversarial examples $x^*$.}
    \label{fig:alignment}
    \vspace{-0.75em}
\end{figure}

\section{The Sensitivity and Invariance Tradeoff}
\label{sec:tradeoff}

In this section, we show that if the norm that is used to define ``small'' adversarial perturbations is misaligned with the labeling oracle $\mathcal{O}$, then the robust classification objective in~\eqref{eq:robust_err} is insufficient for preventing both sensitivity-based and invariance-based adversarial examples under that norm. That is, we show 
that optimizing a model to attain low robust error on perturbations of norm $\varepsilon$ cannot prevent both sensitivity and invariance adversarial examples.

We begin by formalizing our notion of norm-oracle misalignment. The definition applies to any similarity metric over inputs, of which $\ell_p$-norms are a special case.

\begin{definition}[Distance-Oracle Misalignment]
    \label{def:align}
	Let $\texttt{dist}: \mathbb{R}^d \times \mathbb{R}^d \to \mathbb{R}$ be a distance measure (e.g., $\|x_1 - x_2\|$). We say that $\texttt{dist}$ is aligned with the oracle $\mathcal{O}$ if for any input $x$ with $\mathcal{O}(x) = y$, and any inputs $x_1, x_2$ such that $\mathcal{O}(x_1) = y, \; \mathcal{O}(x_2) \neq y $, we have $\texttt{dist}(x, x_1) < \texttt{dist}(x, x_2)$.  $\texttt{dist}$ and $\mathcal{O}$ are misaligned if they are not aligned.
\end{definition}

For natural images, $\ell_p$-norms (or other simple metrics) are clearly misaligned with our own perceptual metric. A concrete example is in Figure~\ref{fig:imageNetperturbations}.
This simple fact has deep implications for the suitability of the robust classification objective in~\eqref{eq:robust_err}. For an input $(x, y) \sim \mathcal{D}$, we define the size of the smallest class-changing perturbation as:
\begin{equation}
\label{eq:class-change-eps}
\varepsilon^*(x)  \coloneqq \min \left\{\|\Delta\|: \mathcal{O}(x+\Delta) \notin \{y, \bot\} \right\} \;.
\end{equation}
Let $x$ be an input where the considered distance function is not aligned with the oracle.
Let $x_2 = x + \Delta$ be the closest input to $x$ with a different class label, i.e., $\mathcal{O}(x_2) = y' \neq y$ and $\|\Delta\| = \varepsilon^*(x)$. As the distance and oracle are misaligned, there exists an input $x_1 = x + \Delta'$ such that $\|\Delta'\| > \varepsilon^*(x)$ and $\mathcal{O}(x_1) = y$.
So now, if we train a model to be robust (in the sense of ~\eqref{eq:robust_err}) to perturbations of norm bounded by $\varepsilon \leq \varepsilon^*(x)$, the model might misclassify $x_1$, i.e., it is sensitive to non-semantic changes. Instead, if we make the classifier robust to perturbations bounded by $\varepsilon > \varepsilon^*(x)$, then $x_2$ becomes an invariance adversarial examples as the model will classify it the same way as $x$.
The two types of failure modes are visualized in Figure~\ref{fig:alignment}.

\begin{lemma}
    \label{lemma:nn}
    Constructing an oracle-aligned distance function that satisfies Definition~\ref{def:align} is as hard
    as constructing a function $f$ so that $f(x) = \mathcal{O}(x)$, i.e., $f$ perfectly solves the oracle's classification task.
\end{lemma}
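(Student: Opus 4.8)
The plan is to prove Lemma~\ref{lemma:nn} as a pair of mutual reductions, showing that an oracle-aligned distance function and a perfect classifier are interconstructible up to negligible overhead.

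\textbf{Easy direction: a perfect classifier yields an aligned distance.} Given $f$ with $f=\mathcal{O}$, I would set $\texttt{dist}(a,b)=0$ if $f(a)=f(b)$ and $\texttt{dist}(a,b)=1$ otherwise — a legitimate choice, since Definition~\ref{def:align} only asks for a function $\mathbb{R}^d\times\mathbb{R}^d\to\mathbb{R}$ (incidentally this is even a pseudometric). For any $x$ with $\mathcal{O}(x)=y$ and any $x_1,x_2$ with $\mathcal{O}(x_1)=y$, $\mathcal{O}(x_2)\neq y$, one has $f(x_1)=f(x)$ so $\texttt{dist}(x,x_1)=0$, and $f(x_2)\neq f(x)$ so $\texttt{dist}(x,x_2)=1$; hence $\texttt{dist}(x,x_1)<\texttt{dist}(x,x_2)$ and Definition~\ref{def:align} holds.

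\textbf{Main direction: an aligned distance yields a perfect classifier.} Given $\texttt{dist}$ satisfying Definition~\ref{def:align}, I would first obtain one labeled reference point per class, i.e., inputs $r_c$ with $\mathcal{O}(r_c)=c$ for $c\in\{1,\dots,C\}$ — this costs only $C$ oracle calls, which is negligible compared to the effort of constructing either object and is implicit in any learning setup — and then define the nearest-reference rule $f(x)\coloneqq\argmin_{c}\texttt{dist}(x,r_c)$. To verify correctness, fix any $x$ with $\mathcal{O}(x)=y\neq\bot$ and apply Definition~\ref{def:align} with anchor $x$, same-class point $x_1=r_y$, and different-class point $x_2=r_c$ for each $c\neq y$; this yields $\texttt{dist}(x,r_y)<\texttt{dist}(x,r_c)$ for every $c\neq y$, so the $\argmin$ is attained uniquely at $y$ and $f(x)=\mathcal{O}(x)$. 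Thus $f$ agrees with $\mathcal{O}$ on every labelable input, and combining the two directions gives the equivalence claimed in the lemma.

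\textbf{Where care is needed.} There is no deep obstacle; the lemma is essentially the observation that knowing \emph{which} inputs share a label and knowing \emph{the} labels differ only by naming the $C$ classes. The two points to treat carefully in the formal write-up are: (i) the garbage class $\bot$ — the nearest-reference classifier never outputs $\bot$, and Definition~\ref{def:align} constrains nothing when the anchor $x$ itself is unlabelable, so the statement ``$f$ perfectly solves the oracle's task'' should be phrased as agreement with $\mathcal{O}$ on all $x$ with $\mathcal{O}(x)\neq\bot$, consistent with the signature $f:\mathbb{R}^d\to\{1,\dots,C\}$ used throughout; and (ii) if one insisted $\texttt{dist}$ be a genuine metric rather than an arbitrary real-valued map, the $0/1$ construction in the easy direction would need a trivial adjustment to restore positivity, but nothing essential changes.
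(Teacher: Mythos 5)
Your proof is correct and takes essentially the same route as the paper: the $0/1$ distance induced by a perfect classifier for the easy direction, and a nearest-reference classifier with one labeled representative per class for the main direction. The only differences are cosmetic---you apply Definition~\ref{def:align} directly to the anchor and the reference points, where the paper first detours through a sequence/limit argument before arriving at the same one-nearest-neighbor conclusion, and your explicit remark about the garbage class $\bot$ tidies up a point the paper leaves implicit.
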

The proof of this lemma is in Appendix~\ref{apx:proof_nn}; at a high level, observe that
given a valid distance function that satisfies Definition~\ref{def:align} we can construct
a nearest neighbor classifier that perfectly matches the oracle.
Thus, in general we cannot hope to have such a distance function.

\begin{figure}[t]
    \centering
    \includegraphics[width=0.8\columnwidth]{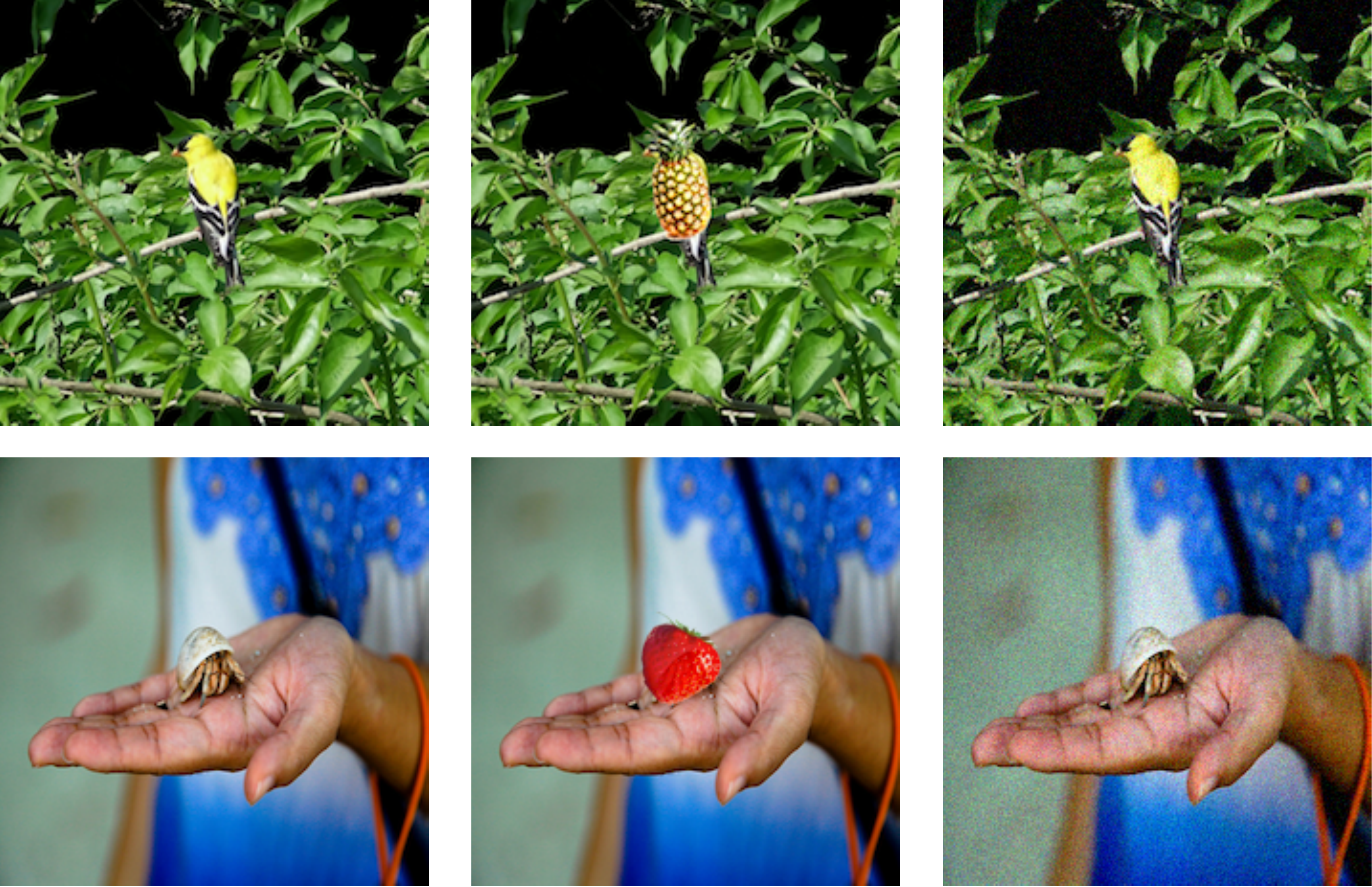} \\
    (a) \hspace{5em} (b) \hspace{5em} (c)
    \vspace{-0.5em}
    \caption{Visualization that $\ell_p$ norms fail to measure semantic similarity in images.\,\,\,\, (a) original image in the ImageNet validation set labeled as a \emph{goldfinch (top), hermit crab (bottom)}; (b) semantic perturbation with a $\ell_2$ perturbation of $19$ (respectively $22$) that replaces the object of interest with a pineapple (top), strawberry (bottom). (c) random perturbation of the same $\ell_2$-norm.}
    \label{fig:imageNetperturbations}
    \vspace{-0.75em}
\end{figure}

\section{Generating Invariance-based Adversarial Examples on MNIST} 
\label{sec:experiments}

We now empirically demonstrate and evaluate the trade-off between sensitivity-based and invariance-based adversarial examples. We propose an algorithm for generating invariance adversarial examples,
and show that robustified models are disparately more vulnerable to these attacks compared to standard models. In particular, we break both \emph{adversarially-trained} and \emph{certifiably-robust} models on MNIST by generating invariance adversarial examples---within the models' (possibly certified) norm bound---to which the models' 
assign different labels than an ensemble of humans.

\paragraph{Why MNIST?} 
We elect to study MNIST, the only dataset for which strong robustness to various $\ell_p$-bounded perturbations is attainable with current techniques~\citep{madry2017towards, schott2018towards}.
The dataset's simplicity is what initially prompted the study of simple $\ell_p$-bounded perturbations~\citep{harnessing_adversarial}. Increasing MNIST models' robustness to such perturbations has since become a standard benchmark~\citep{schott2018towards,madry2017towards,kolter2017provable,raghunathan2018certified}.
Due to the existence of models with high robustness to various $\ell_p$-bounded attacks, robust classification on MNIST is considered close to solved~\citep{schott2018towards}.

This paper argues that, contrary to popular belief, MNIST is far from being solved. 
We show why optimizing for robustness to $\ell_p$-bounded adversaries is not only insufficient, but actively harms the performance of the classifier against alternative invariance-based attacks.

In Section~\ref{ssec:natural}, we show that complex vision tasks (e.g., ImageNet) are also affected by the fundamental tradeoffs we describe. These tradeoffs are simply not apparent yet, because of our inability to train models with non-negligible robustness to any attacks on these tasks.

\subsection{Generating Model-agnostic Invariance-based Adversarial Examples}

We propose a model-agnostic algorithm for crafting invariance adversarial examples. Our attack generates minimally perturbed invariance adversarial examples that cause humans to change their classification.%
We then evaluate these examples against multiple models.
The rationale for this approach is mainly that obtaining human labels is expensive, which encourages the use of a single attack for all models. 

\newlength{\textfloatsepsave} \setlength{\textfloatsepsave}{\textfloatsep} \setlength{\textfloatsep}{1em}
\begin{algorithm}[t]
	%\SetAlgoLined
	\SetKwProg{GenInv}{GenInv}{}{}
    \SetKwInOut{Input}{Input}
    \SetKwInOut{Output}{Output}
    \GenInv{$(x, y, \mathcal{X},\mathcal{T})$}{
    	$\mathcal{S} = \{\hat{x}: (\hat{x}, \hat{y}) \in \mathcal{X}, \hat{y} \neq y\}$ \\
    	$\mathcal{X}^* = \{t(\hat{x}): t \in \mathcal{T}, \hat{x} \in \mathcal{S}\}$\\
    	\Return $x^* = \argmin_{\hat{x} \in \mathcal{X}^*} \|\hat{x} - x\|$
    	%$\Delta = x^* - x$\\
    	%\Return $x^* = x + \texttt{PostProcess}(\Delta)$
    }
    \caption{Meta-algorithm for finding invariance-based adversarial examples. For an input $x$, we find an input $x^*$ of a different class in the dataset $\mathcal{X}$, that is closest to $x$ under some set of semantics-preserving transformations. $\mathcal{T}$. 
    }
    \label{alg:geninv}
\end{algorithm}
\setlength{\textfloatsep}{\textfloatsepsave}

The high-level algorithm we use is in Algorithm~\ref{alg:geninv} and described below. It is simple, albeit tailored to datasets where comparing images in pixel space is meaningful, like MNIST.%
\footnote{\citet{kaushik2019learning} consider a similar problem for NLP tasks. They ask human labelers to produce ``counterfactually-augmented data'' by introducing a minimal number of changes to a text document so as to change the document's semantics.}

Given an input $x$, the attack's goal is to find the smallest class-changing perturbation $x^* = x+\Delta$ (c.f.~\eqref{eq:class-change-eps}) such that $\mathcal{O}(x^*) \neq \mathcal{O}(x)$. Typically, $x^*$ is not a part of the dataset. We thus approximate $x^*$ via  \emph{semantics-preserving} transformations of other inputs. That is, for the set $\mathcal{S}$ of inputs of a different class than $x$, we apply transformations $\mathcal{T}$ (e.g., small image rotations, translations) that are known a-priori to preserve input labels. We then pick the transformed input that is closest to our target point under the considered $\ell_p$ metric. In Appendix~\ref{app:attacksMNIST},
we describe instantiations of this algorithm for the $\ell_0$ and $\ell_\infty$ norms. Figure \ref{fig:l0attack} visualizes the sub-steps for the $\ell_0$ attack, including an extra post-processing that further reduces the perturbation size.

\paragraph{Measuring Attack Success.}
We refer to an invariance adversarial example as \emph{successful} if it causes a change in the oracle's label, i.e., $\mathcal{O}(x^*) \neq \mathcal{O}(x)$. This is a model-agnostic version of Definition~\ref{def:advExamInv}.
In practice, we simulate the oracle by asking an ensemble
of humans to label the point $x^*$; if more than some fraction
of them agree on the label (throughout this section, $70\%$) and that
label is different from the original, the attack is successful.
Note that success or failure is independent of any machine learning
model. 

\begin{figure}
    %\vspace{-0.5em}
    \centering
    \includegraphics[width=\columnwidth]{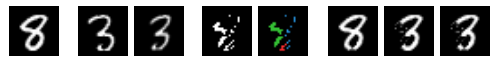} \\[-0.5em]
     (a) \hspace{1.5em} (b) \hspace{1em} (c) \hspace{1.5em} (d) \hspace{1em} (e) \hspace{4em} (f-h)  \hspace{2em}
    \vspace{-0.5em}
    \caption{Process for generating $\ell_0$ invariant adversarial examples. From left to right:
    (a) the original image of an 8;
    (b) the nearest training image (labeled as 3), before alignment;
    (c) the nearest training image (still labeled as 3), after alignment;
    (d) the $\Delta$ perturbation between the original and aligned training example;
    (e) spectral clustering of the perturbation $\Delta$; and
    (f-h) candidate invariance adversarial examples, selected by applying subsets of clusters of $\Delta$ to the original image. (f) is a failed attempt at an invariance adversarial example. (g) is successful, but introduces a larger perturbation than necessary (adding pixels to the bottom of the 3). (h) is successful and minimally perturbed.}
    \label{fig:l0attack}
    \vspace{-0.5em}
\end{figure}

\subsection{Evaluation}
\label{sec:eval}

\paragraph{Attack analysis.} We generate 100 invariance adversarial examples on inputs randomly drawn from the MNIST test set, for both the $\ell_0$ and $\ell_\infty$ norms. Our attack is slow, with the alignment process taking (amortized) minutes per example. We performed no optimizations of this process and expect it could be improved.
The mean $\ell_0$ distortion of successful examples is 25.9 (with a median of 25). 
The $\ell_\infty$ attack always uses the full budget of either $\varepsilon=0.3$ or $\varepsilon=0.4$ and runs in a similar amount of time.

\paragraph{Human Study.} We conducted a human study to evaluate whether our invariance adversarial examples are indeed successful, i.e., whether humans agree that the label has been changed.
We also hand-crafted $50$ invariance adversarial examples for the $\ell_0$ and $\ell_\infty$ norm. The process was quite simple: we built an image editor that lets us change images at a pixel level under an $\ell_p$ constraint. One author then modified 50 random test examples in the way that they perceived as changing the underlying class. 
We presented all these invariance examples to 40 human evaluators. Each evaluator classified 100 digits, half of which were unmodified MNIST digits, and the other half were sampled randomly from our $\ell_0$ and $\ell_\infty$ invariance adversarial examples. 

\begin{table}
\centering
    \begin{tabular}{@{}lr@{}}
    \toprule
         Attack Type & Success Rate \\
         \midrule
         Clean Images & 0\% \\
         \noalign{\smallskip}
         $\ell_0$ Attack & 55\%  \\
         \noalign{\smallskip}
         $\ell_\infty$, $\varepsilon=0.3$ Attack & 21\%  \\
         $\ell_\infty$, $\varepsilon=0.3$ Attack (\textbf{manual}) & 26\%  \\
         \noalign{\smallskip}
         $\ell_\infty$, $\varepsilon=0.4$ Attack & 37\%  \\
         $\ell_\infty$, $\varepsilon=0.4$ Attack (\textbf{manual}) & 88\%   \\
         \bottomrule
    \end{tabular}
    \vspace{-0.5em}
    \caption{Success rate of our invariance adversarial examples in causing humans to switch their classification.}
    \label{tab:humanstudy:eg}
    \vspace{-1em}
\end{table}%

\begin{figure}[t]
\centering
    \includegraphics[width=0.45\columnwidth]{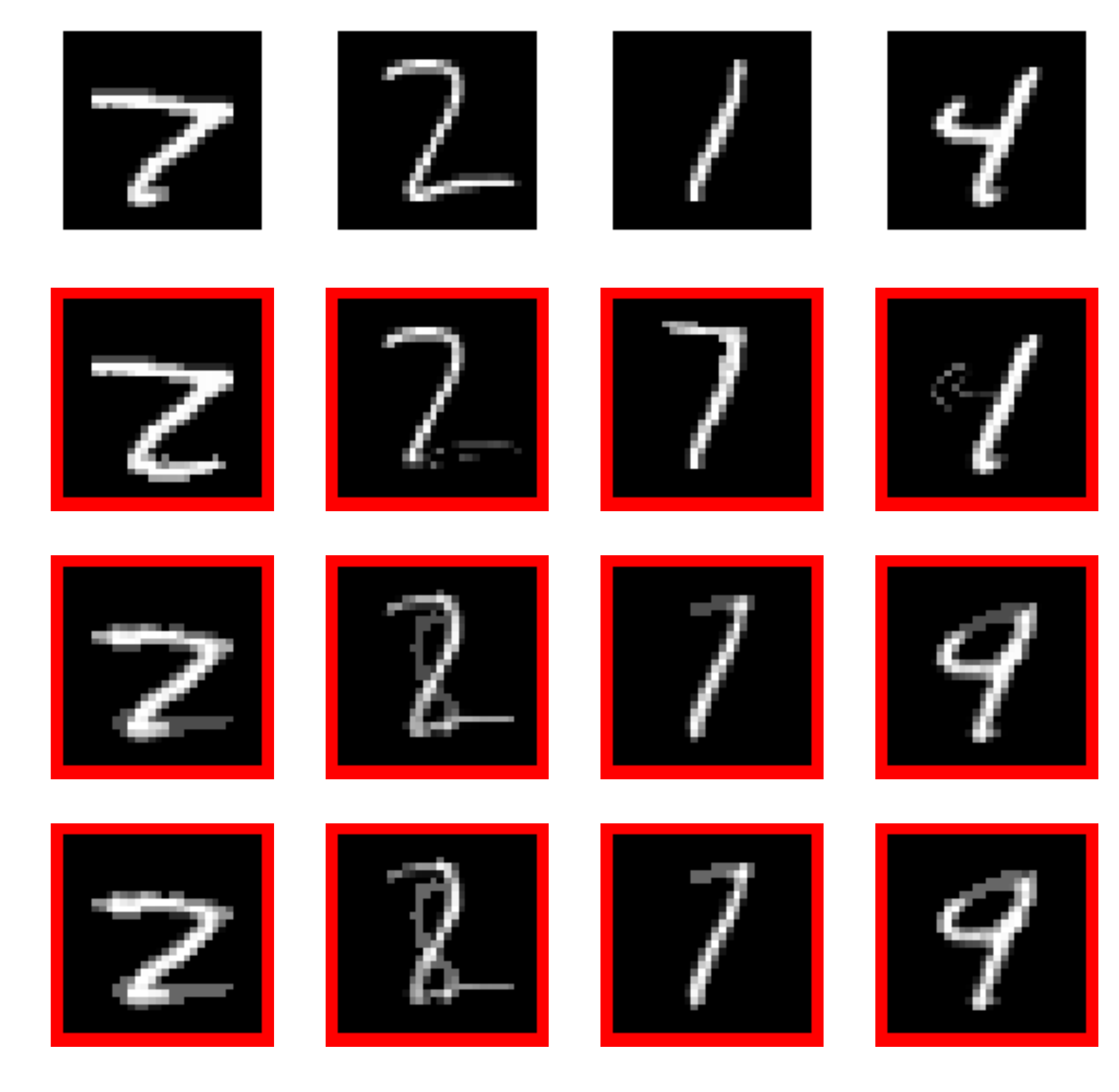}
    \hfill
    \includegraphics[width=0.45\columnwidth]{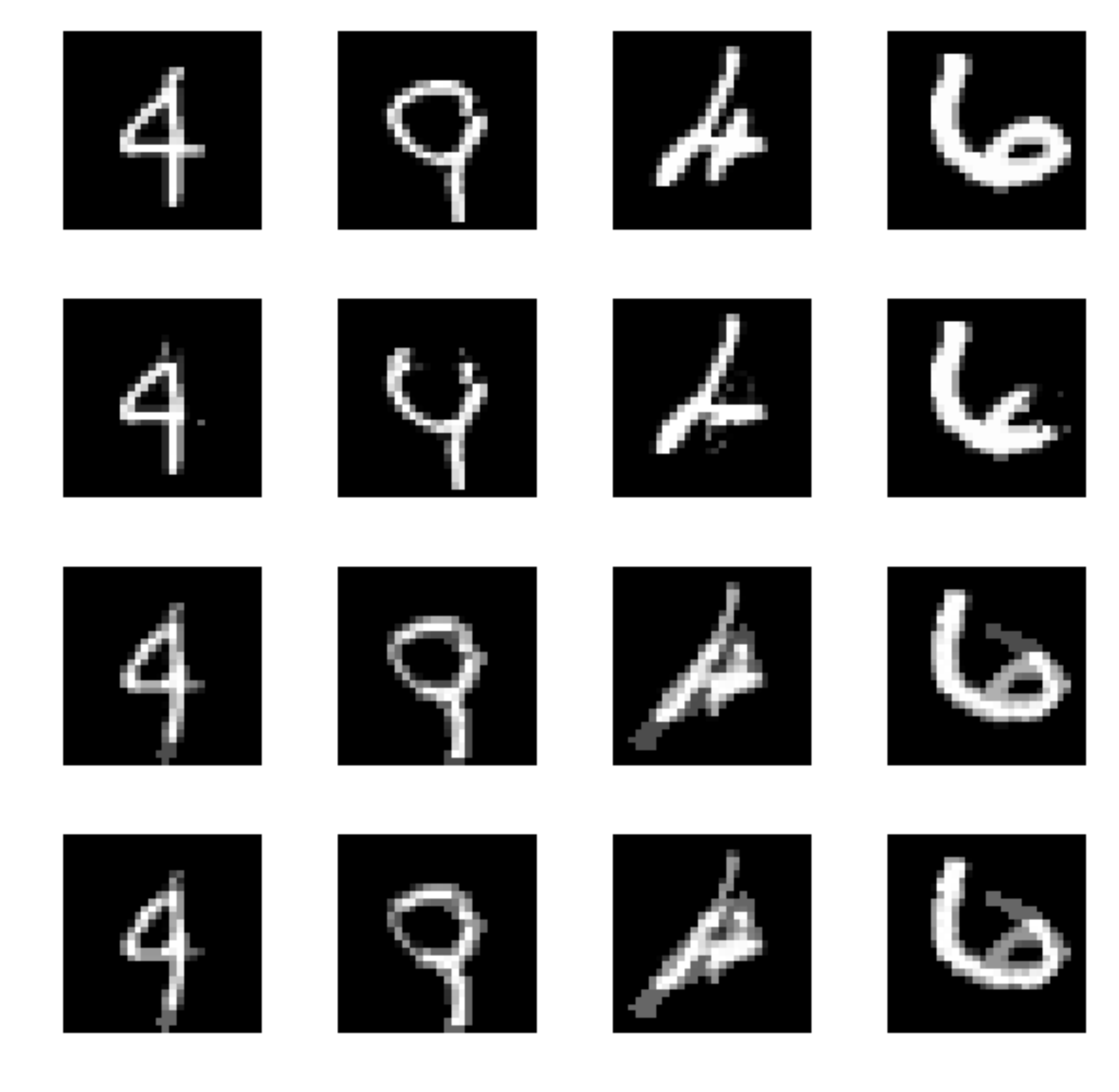}
    \vspace{-0.75em}
     \caption{Our invariance-based adversarial examples. Top to bottom: original images and our $\ell_0$, $\ell_\infty$ at $\varepsilon=0.3$ and $\ell_\infty$ at $\varepsilon=0.4$ invariance adversarial examples. (left) successful attacks; (right) failed attack attempts.}
    \label{fig:examples}
    \vspace{-0.75em}
\end{figure}

\begin{figure*}[t]
\centering
\begin{threeparttable}
    \centering
    \begin{tabularx}{\textwidth}{X c X}
    &\begin{tabular}{@{}l r r r r r r@{}}
    \toprule
          \multicolumn{7}{c}{Agreement between model and humans, for \emph{successful} invariance adversarial examples} \\
          \midrule
          \textbf{Model:}\tnote{1} & \textbf{Undefended} &   \textbf{$\ell_0$ Sparse}  & \textbf{Binary-ABS} & \textbf{ABS}  & \textbf{$\ell_\infty$ PGD} ($\epsilon=0.3$) & \textbf{$\ell_2$ PGD} ($\epsilon=2$)   \\
         \midrule
         Clean        &  99\%  & 99\% & 99\% & 99\% & 99\% & 99\% \\
         $\ell_0$     & 80\%  & 38\% & 47\% & 58\% & 56\%$^*$ & 27\%$^*$ \\
         $\ell_\infty$, $\varepsilon=0.3$ & 33\%  &  19\%$^*$ & 0\% & 14\% & 0\% & 5\%$^*$  \\
         $\ell_\infty$, $\varepsilon=0.4$ & 51\%&  27\%$^*$ & 8\% & 18\% & 16\%$^*$ & 19\%$^*$\\
         \bottomrule
    \end{tabular}
    &
    \end{tabularx}
    \begin{tablenotes}
    \item[1]\footnotesize{$\ell_0$ Sparse: \cite{bafna2018thwarting}; ABS and Binary-ABS: \cite{schott2018towards}; $\ell_\infty$ PGD and $\ell_2$ PGD: \cite{madry2017towards}}
    \end{tablenotes}
\end{threeparttable}
\vspace{-0.25em}
\captionof{table}{Model accuracy with respect to the oracle human labelers on
    the subset of examples where the human-obtained oracle label is
    different from the test label.
    Models which are \emph{more} robust to \emph{perturbation} adversarial examples (such as those trained with adversarial training) tend to agree with humans \textbf{less often} on \emph{invariance-based} adversarial examples. Values denoted with an asterisks $^*$ violate the perturbation threat model of the defense and should not be taken to be attacks. When the model is \emph{wrong}, it failed to classify the input as the new oracle label.}
    \label{tab:modelaccuracy}
    %\vspace{-0.5em}
\end{figure*}

\paragraph{Results.} Of 100 clean (unmodified) test images, 98 are labeled identically by \emph{all} human evaluators. The other 2 images were labeled identically by over $90\%$ of evaluators. 

Our $\ell_0$ attack is highly effective: For 55 of the 100 examples at least $70\%$ of human evaluators labeled it the same way, with a different label than the original test label. Humans only agreed with the original test label (with the same $70\%$ threshold) on 34 of the images, while they did not form a consensus on 18 examples.
The simpler $\ell_\infty$ attack is less effective: with a distortion of $0.3$ the oracle label changed $21\%$ of the time and with $0.4$ the oracle label changed $37\%$ of the time. The manually created $\ell_\infty$ examples with distortion of $0.4$ were highly effective however: for $88\%$ of the examples, at least $70\%$ assigned the same label (different than the test set label). We summarize results in Table~\ref{tab:humanstudy:eg}.
In Figure~\ref{fig:examples} we show sample invariance adversarial examples.

Our attack code, as well as our invariance examples and their human-assigned labels are available at {\footnotesize \url{https://github.com/ftramer/Excessive-Invariance}}.

To simplify the analysis below, we split our generated invariance adversarial examples into two sets: the successes and the failures, as determined by whether the plurality decision by humans was different than or equal to the original label. We only evaluate models on those invariance adversarial examples that caused the humans to switch their classification.

\paragraph{Model Evaluation.} Given oracle ground-truth labels for each of the images (as decided by humans), we report how often models agree with the human-assigned label.
Table~\ref{tab:modelaccuracy} summarizes this analysis. For the invariance adversarial examples, we report model accuracy only on \emph{successful} attacks (i.e., those where the human oracle label changed between the original image and the modified image).\footnote{It may seem counter-intuitive that our $\ell_\infty$ attack with $\varepsilon=0.3$ appears stronger than the one with $\varepsilon=0.4$. Yet, given two successful invariance examples (i.e., that both change the human-assigned label), the one with lower distortion is expected to change a model's output less often, and is thus a stronger invariance attack.}
For these same models, Table~\ref{tab:modelrobustness} in Appendix~\ref{app:hyperparams} reports the ``standard'' robust accuracy for sensitivity-based adversarial examples, i.e., in the sense of~\eqref{eq:robust_err}.

The models which empirically achieve the highest robustness against $\ell_0$ perturbations (in the sense of~\eqref{eq:robust_err}) are the $\ell_0$-Sparse classifier of \citet{bafna2018thwarting}, the Binary-ABS model of~\citet{schott2018towards}, and the $\ell_2$-PGD adversarially trained model (see Table~\ref{tab:modelrobustness} in the Appendix for a comparison of the robustness of these models). Thus, these are the models that are most \emph{invariant} to perturbations of large $\ell_0$-norm. We find that these are the models that achieve the lowest accuracy---as measured by the human labelers---on our invariance examples. Moreover, all robust models perform much worse than an undefended ResNet-18 model on our invariance attacks. This includes models such as the $\ell_\infty$-PGD adversarially trained model, which do not explicitly aim at worst-case robustness against $\ell_0$ noise. 
Thus, we find that models that were designed to reduce excessive sensitivity to certain non-semantic features, become excessively invariant to other features that are semantically meaningful.

Similarly, we find that models designed for $\ell_\infty$ robustness (Binary-ABS and $\ell_\infty$-PGD) also fare the worst on our $\ell_\infty$ invariance adversarial examples. Overall, all robust models do worse than the undefended baseline. The results are consistent for attacks with $\eps=0.3$ and with $\eps=0.4$, the latter being more successful in changing human labels.

Note that the Binary-ABS defense of~\cite{schott2018towards} boasts $60\%$ (empirical) robust accuracy on $\ell_\infty$-attacks with $\varepsilon=0.4$ (see~\cite{schott2018towards}). Yet, on our our invariance examples that satisfy this perturbation bound, the model actually disagrees with the human labelers $92\%$ of the time, and thus achieves only $8\%$ true accuracy on these examples. Below, we make a similar observation for a \emph{certified} defense.

\paragraph{Trading Perturbation-Robustness for Invariance-Robustness.}

To better understand how robustness to sensitivity-based adversarial examples influences robustness to invariance attacks, we evaluate a range of adversarially-trained models on our invariance examples.

Specifically, we trained $\ell_\infty$-PGD models with $\varepsilon \in [0, 0.4]$ and $\ell_1$-PGD models (as a proxy for $\ell_0$-robustness) with $\varepsilon \in [0, 15]$. 
We verified that training against larger perturbations resulted in a monotonic increase in adversarial robustness, in the sense of~\eqref{eq:robust_err} (more details are in Appendix~\ref{app:hyperparams}).
We then evaluated these models against respectively the $\ell_\infty$ and $\ell_0$ invariance examples.
Figure~\ref{fig:pgd_invariance} shows that robustness to larger perturbations leads to higher vulnerability to invariance-based examples.

Interestingly, while sensitivity-based robustness does not generalize beyond the norm-bound on which a model is trained (e.g., a model trained on PGD with $\varepsilon=0.3$ achieves very little robustness to PGD with $\varepsilon=0.4$~\cite{madry2017towards}), excessive invariance does generalize (e.g., a model trained on PGD with $\varepsilon=0.2$ is more vulnerable to our invariance attacks with $\varepsilon \geq 0.3$ compared to an undefended model).

\begin{figure}[t]
    \centering
    \includegraphics[width=\columnwidth]{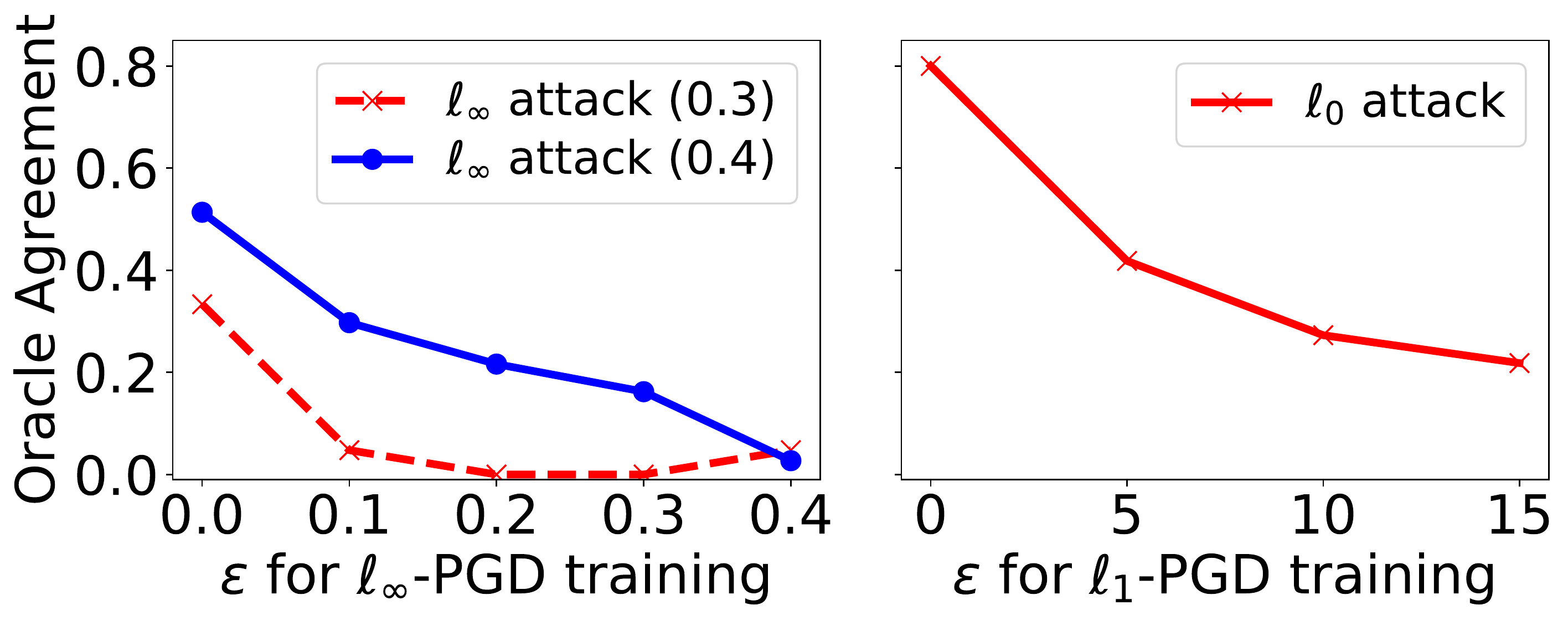}
    
    \vspace{-0.75em}
    \caption{Higher noise-robustness leads to higher vulnerability to invariance attacks. (left) For models trained with $\ell_\infty$-PGD, a higher bound $\varepsilon \in [0, 0.4]$ implies lower accuracy on $\ell_\infty$-bounded invariance examples. (right) Models trained with $\ell_1$-PGD evaluated on
    the $\ell_0$ invariance attack.}
    \label{fig:pgd_invariance}
    \vspace{-0.75em}
\end{figure}

\paragraph{Breaking Certified Defenses.}
Our invariance attacks even constitute
a \emph{break} of some certified defenses. 
For example, \citet{zhang2019towards}
develop a defense which \emph{proves} that the accuracy on the test set is at least $87\%$ under $\ell_\infty$ perturbations
of size $\varepsilon=0.4$.
When we run their pre-trained model on all 100 of our $\varepsilon=0.4$
invariance adversarial
examples (i.e., not just the successful ones) we find it has a $96\%$ ``accuracy'' (i.e., it matches
the original test label $96\%$ of the time).
However, when we look at the agreement between this model's predictions with the new labels assigned by the human evaluators, the model's accuracy is just $63\%$.

Thus, while the proof in the paper is \emph{mathematically correct}
it does not actually deliver $87\%$ robust accuracy under $\ell_\infty$-attacks with $\varepsilon=0.4$: humans change their classification for many of these perturbations.
Worse, for the 50 adversarial examples we crafted by hand, the model
\emph{disagrees} with the human ensemble $88\%$ of the time:
it has just $12\%$ accuracy.

\subsection{Natural Images}
\label{ssec:natural}

While our experiments are on MNIST, similar phenomena may arise in other vision tasks. Figure~\ref{fig:imageNetperturbations} shows two perturbations of ImageNet images: the rightmost perturbation is imperceptible and thus classifiers should be robust to it. Conversely, the middle image was semantically changed, and classifiers should be sensitive to such changes. Yet, the $\ell_2$ norm of both perturbations is the same. 
Hence, enforcing robustness to $\ell_2$-noise of some fixed size $\varepsilon$ will necessarily result in a classifier that is either sensitive to the changes on the right, or invariant to the changes in the middle image.
Such a phenomenon will necessarily arise for any image dataset that contains small objects, as perturbations of small $\ell_2$ magnitude will be sufficient to occlude the object, thereby changing the image semantics.

This distance-oracle misalignment extends beyond the $\ell_2$-norm. For instance, \citet{co2018procedural} show that a perturbation of size 16/255 in $\ell_\infty$ can suffice to give an image of a cat the appearance of a shower curtain print, which are both valid ImageNet classes. Yet, a \emph{random} perturbation of the same magnitude is semantically meaningless.

On CIFAR-10, some recent defenses are possibly already overly invariant. For example, \citet{shaeiri2020towards} and \citet{panda2019discretization} aim to train models that are robust to $\ell_\infty$ perturbations of size $\epsilon=32/255$. Yet, \citet{tsipras2018robustness} show that perturbations of that magnitude can be semantically meaningful and can be used to effectively \emph{interpolate} between CIFAR-10 classes. The approach taken by
~\citet{tsipras2018robustness} to create these perturbations, which is based on a model with robustness to very small $\ell_\infty$ noise, may point towards an efficient way of automating the generation of invariance attacks for tasks beyond MNIST.
The work of~\citet{sharif2018suitability} also shows that ``small'' $\ell_\infty$ noise (of magnitude $25/255$) can reliably fool human labelers on CIFAR-10.

\section{The Overly-Robust Features Model}
\label{sec:robust_features}

The experiments in Section~\ref{sec:experiments} show that models can be robust to perturbations large enough to change an input's semantics. Taking a step back, it is not obvious why training such classifiers is possible, i.e., why does excessive invariance not harm regular accuracy.
To understand the learning dynamics of these overly-robust models, we ask two questions:
\begin{myenumerate}
	\item Can an overly-robust model fit the training data?
	\item Can such a model generalize (robustly) to test data?
\end{myenumerate}
For simplicity, we assume that for every point $(x, y) \sim \mathcal{D}$, the closest point $x^*$ (under the chosen norm) for which $\mathcal{O}(x^*) \neq y$ is at a constant distance $\varepsilon^*$. We train a model $f$ to have low robust error (as in~\eqref{eq:robust_err}) for perturbations of size $\varepsilon > \varepsilon^*$. This model is thus overly-robust.

We first ask under what conditions $f$ may have low robust training error. 
A necessary condition is that there do not exist training points $(x_i, y_i), (x_j, y_j)$ such that $y_i \neq y_j$ and $\|x_i - x_j\| \leq \varepsilon$. As $\varepsilon$ is larger than the inter-class distance, the ability to fit an overly robust model thus relies on the training data not being fully representative of the space to which the oracle assigns labels.
This seems to be the case in MNIST: as the dataset consists of centered, straightened and binarized digits, even an imaginary infinite-sized dataset might not contain our invariance adversarial examples.

The fact that excessive robustness \emph{generalizes} (as provably evidenced by the model of~\citet{zhang2019towards})  points to a deeper issue: there must exist \emph{overly-robust} and \emph{predictive} features in the data---that are not aligned with human perception. 
This mirrors the observations of~\cite{ilyas2019adversarial}, who show that excessive sensitivity is caused by non-robust yet predictive features. On MNIST, our experiments confirm the existence of overly-robust generalizable features.

We formalize these observations using a simple classification task inspired by~\cite{tsipras2018robustness}. We consider a binary task where unlabeled inputs $x \in \mathbb{R}^{d+2}$ are sampled from a distribution $\mathcal{D}_k^*$ with parameter $k$:
\begin{gather*}
z \stackrel{\text{u.a.r}}{\sim} \{-1, 1\}, 
\quad 
x_1 = z/2
\\
x_2 = \begin{cases}
+z & \text{w.p. } \frac{1 + 1/k}{2} \\
-z & \text{w.p. } \frac{1 - 1/k}{2}
\end{cases}
,\ x_3,  \dots, x_{d+2} \stackrel{\text{i.i.d}}{\sim} \mathcal{N}(\frac{z}{\sqrt{d}}, k) \;.
\end{gather*}
Here $\mathcal{N}(\mu, \sigma^2)$ is a normal distribution and $k > 1$ is a constant chosen so that only feature $x_1$ is strongly predictive of the latent variable $z$ (e.g., $k=100$ so that $x_2, \dots, x_{d+2}$ are almost uncorrelated with $z$).
The oracle is defined as $\mathcal{O}(x) = \texttt{sign}(x_1)$, i.e., feature $x_1$ fully defines the oracle's class label, and other features are nearly uncorrelated with it. Note that the oracle's labels are robust under any $\ell_\infty$-noise with norm strictly below $\varepsilon=1/2$.

We model the collection of ``sanitized'' and labeled datasets from a data distribution as follows: the semantic features (i.e., $x_1$) are preserved, while ``noise'' features have their variance reduced (e.g., because non-standard inputs are removed). Sanitization thus enhances ``spurious correlations''~\cite{jo2017measuring, ilyas2019adversarial} between non-predictive features and class labels.\footnote{In digit classification for example, the number of pixels above $1/2$ is a feature that is presumably very weakly correlated with the class $8$. In the MNIST dataset however, this feature is fairly predictive of the class $8$ and robust to $\ell_\infty$-noise of size $\varepsilon=0.4$.}
We further assume that the data labeling process introduces some small label noise.\footnote{This technicality avoids that classifiers on $\mathcal{D}$ can trivially learn the oracle labeling function. Alternatively, we could define feature $x_1$ so that is is hard to learn for certain classes of classifiers.}
Specifically, the labeled data distribution $\mathcal{D}$ on which we train and evaluate classifiers is obtained by sampling $x$ from a sanitized distribution 
$\mathcal{D}^*_{1+\alpha}$ (for a small constant $\alpha>0$) where features $x_2, \dots, x_{d+2}$ are strongly correlated with the oracle label. The label $y$ is set to the correct oracle label with high probability $1-\delta$.
The consequences of this data sanitization are two-fold (see Appendix~\ref{app:proofs} for proofs):
\begin{myenumerate}
	\item A standard classifier (that maximizes accuracy on $\mathcal{D}$) agrees with the oracle with probability at least $1-\delta$, but is vulnerable to $\ell_\infty$-perturbations of size $\varepsilon=O(d^{-1/2})$.
	\item There is an overly-robust model that only uses feature $x_2$ and has robust accuracy $1-\alpha/2$ on $\mathcal{D}$ for $\ell_\infty$-noise of size $\varepsilon = 0.99$. This classifier is vulnerable to invariance attacks as the oracle is not robust to such perturbations.
\end{myenumerate}

\paragraph{The Role of Data Augmentation.}

This simple task suggests a natural way to prevent the training of overly robust models. If \emph{prior knowledge} about the task suggests that classification should be invariant to features $x_2 \dots, x_{d+2}$, then enforcing these invariances would prevent a model from being robust to excessively large perturbations.

A standard way to enforce invariances is via data augmentation.
In the above binary task, augmenting the training data by randomizing over features $x_2, \dots, x_{d+2}$ would force the model to rely on the only truly predictive feature, $x_1$.

\begin{figure}[t]
    \centering
    \includegraphics[width=0.7\columnwidth]{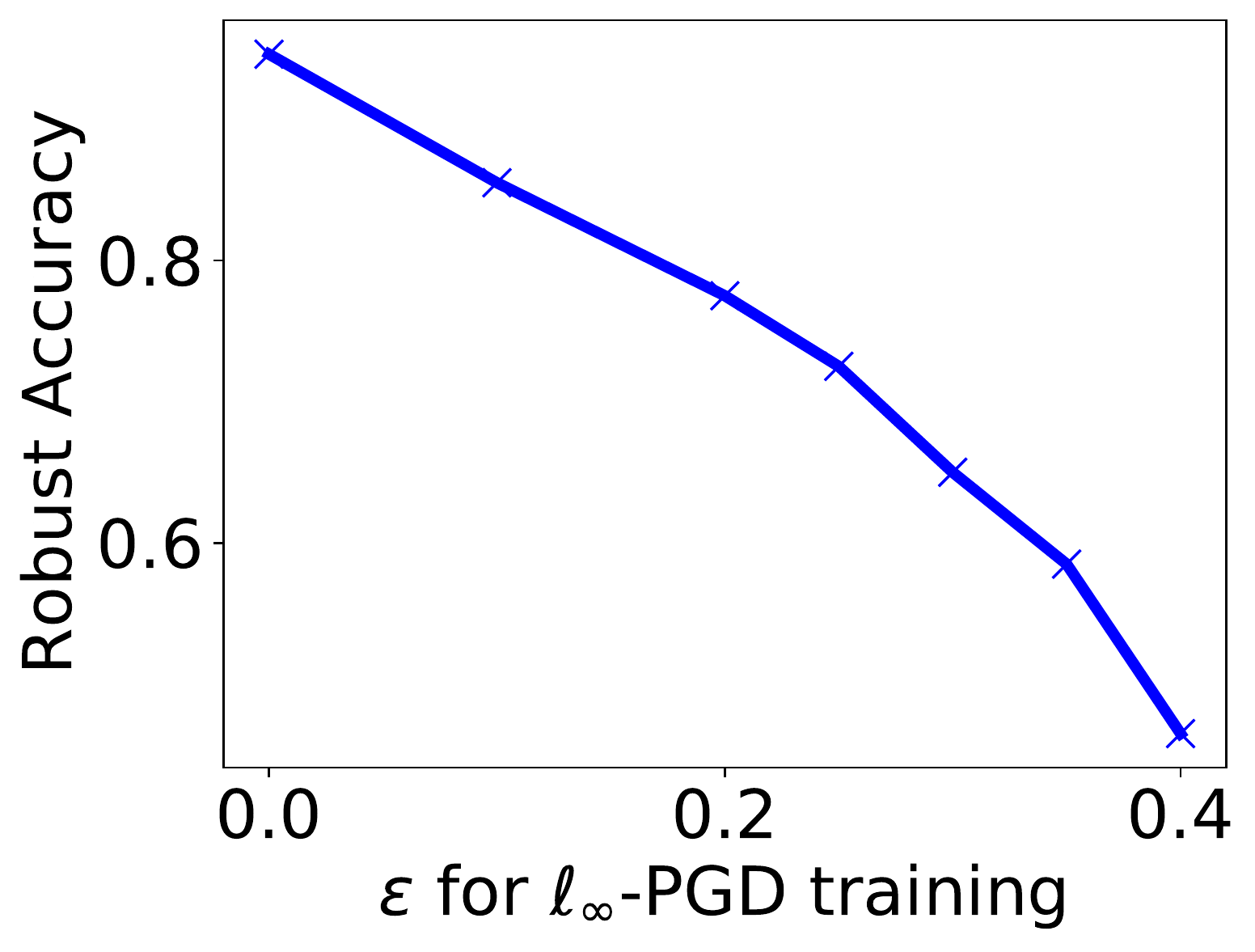}
    \vspace{-0.75em}
    \caption{Robust accuracy of models trained and evaluated on an adversary combining a small spatial data augmentation (rotation + translation) with an $\ell_\infty$ perturbation bounded by $\varepsilon$.}
    \vspace{-0.75em}
    \label{fig:RT_eps_models}
\end{figure}

\begin{figure}[t]
    \centering
    \includegraphics[width=0.8\columnwidth]{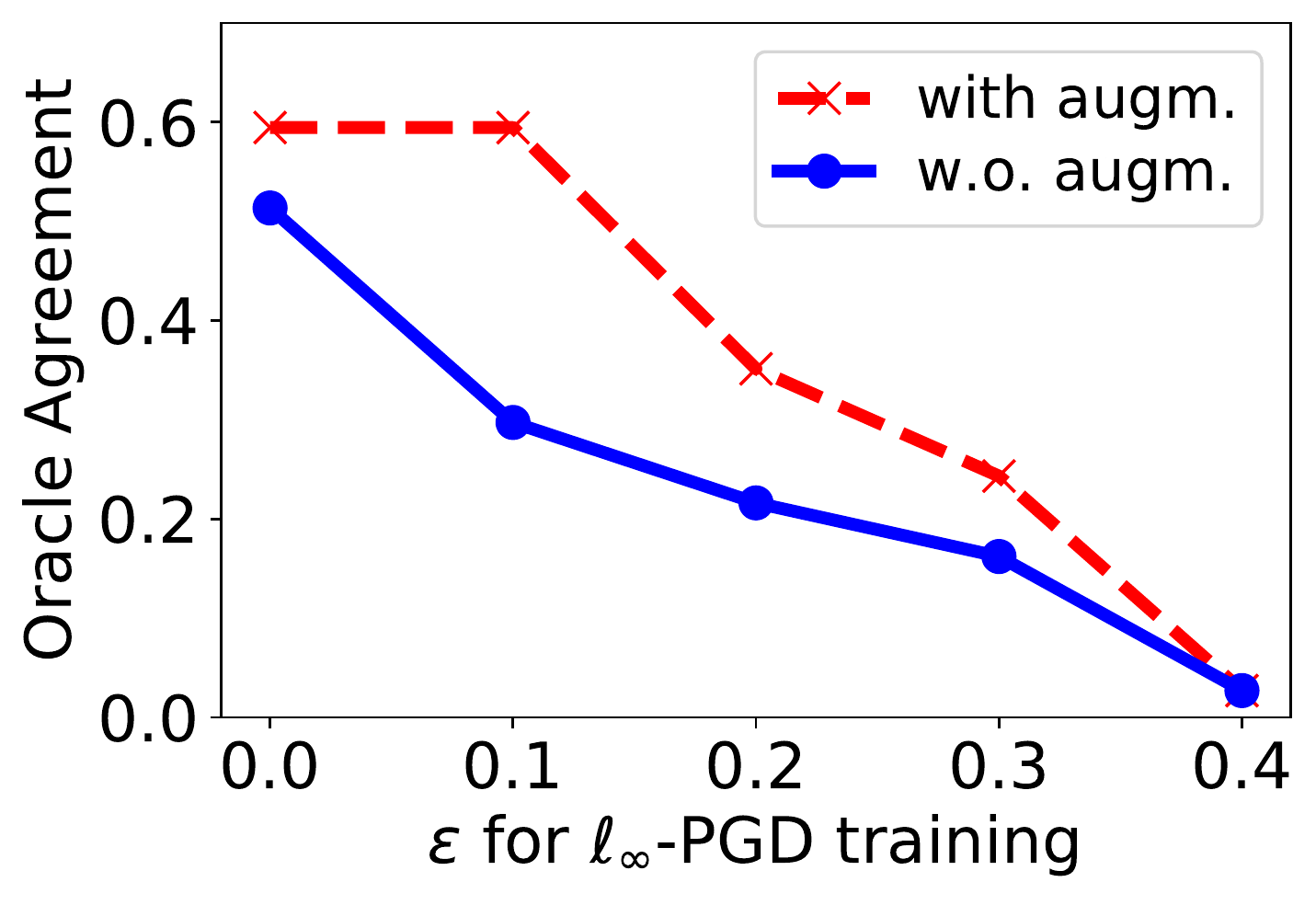}
    \vspace{-0.75em}
    \caption{Model accuracy with respect to human labelers on successful invariance adversarial examples of $\ell_\infty$-norm bounded by $0.4$. Models trained with data augmentation agree more often with humans, and thus are less invariant to semantically-meaningful changes.}
    \vspace{-0.75em}
    \label{fig:data_aug}
\end{figure}

We experimented with aggressive data-augmentation on MNIST. For values of $\varepsilon \in [0, 0.4]$, we train models with an adversary that rotates and translates inputs by a small amount and then adds $\varepsilon$-bounded $\ell_\infty$-noise. This attacker mirrors the process we use to generate invariance adversarial examples in Section~\ref{sec:experiments}.
Thus, we expect it to be hard to achieve robustness to attacks with large $\varepsilon$ on this dataset, as this requires the model to correctly classify inputs that humans consider mislabeled.
Figure~\ref{fig:RT_eps_models} confirms this intuition.
As $\varepsilon$ grows, it becomes harder to learn a model that is invariant to both spatial data augmentation and $\ell_\infty$ noise.
We further find that the models trained with data augmentation agree more often with human labelers on our invariance attacks (see Figure~\ref{fig:data_aug}). Yet, even with data augmentation, models trained against large $\ell_\infty$-perturbations still perform worse than an undefended model. 
This simple experiment thus demonstrates that while data-augmentation (over truly invariant features) can help in detecting or preventing excessive invariance to semantic features, even though it is not currently sufficient for training models that resist both sensitivity-based and invariance-based attacks. 

\section{Discussion}

Our results show that solely focusing on robustness to sensitivity-based attacks is insufficient, as mis-specified bounds can cause vulnerability to invariance-based attacks.

\paragraph{On $\ell_p$-norm evaluations.} Our invariance attacks are able to find points within the $\ell_p$-ball in which state-of-the-art classifiers 
are (provably) robust.
This highlights the need for a more careful selection of perturbation bounds when measuring robustness to adversarial examples. At the same time, Figure~\ref{fig:pgd_invariance} shows that even promoting robustness within conservative bounds causes excessive invariance.
The tradeoff explored in Section~\ref{sec:tradeoff} suggests that aiming for robustness against $\ell_p$-bounded attacks may be inherently futile for making models robust to arbitrary adversarial examples. 

\paragraph{Trading Sensitivity and Invariance.}

We show that models that are robust to small perturbations make excessively invariant decisions and are thus vulnerable to other attacks.

Interestingly, 
\citet{engstrom2019adversarial} 
show an opposite effect for models' internal representations.
Denoting the logit layer of a model as $z(x)$, they show that for robust models it is hard to find inputs $x, x^*$ such that $\mathcal{O}(x) \neq \mathcal{O}(x^*)$ and $z(x) \approx z(x^*)$. Conversely, \citet{sabour2015adversarial} and \citet{jacobsen2018excessive} show that excessive invariance of feature layers is common in non-robust models.
These observations are orthogonal to ours as we study invariances in a model's classification layer, and for bounded perturbations.
As we show in Section~\ref{sec:tradeoff}, robustness to large perturbations under a norm that is misaligned with human perception necessarily causes excessive invariance of the model's classifications (but implies nothing about the model's feature layers).

Increasing model robustness to $\ell_p$-noise also leads to other tradeoffs, such as reduced accuracy~\citep{tsipras2018robustness} or reduced robustness to other small perturbations~\citep{yin2019fourier, tramer2019adversarial, kang2019testing}.

\section{Conclusion}

We have introduced and studied a fundamental tradeoff between two types of adversarial examples, that stem either from excessive sensitivity or invariance of a classifier. This tradeoff is due to an inherent misalignment between simple robustness notions and a task's true perceptual metric.

We have demonstrated that 
defenses against $\ell_p$-bounded perturbations on MNIST promote invariance to semantic changes. Our attack exploits this excessive invariance by changing image semantics while preserving model decisions. For  adversarially-trained and certified defenses, our attack can reduce a model's true accuracy to random guessing.

Finally, we have studied the tradeoff between sensitivity and invariance in a theoretical setting where excessive invariance can be explained by the existence of overly-robust features.

Our results highlight the need for a more principled approach in selecting meaningful robustness bounds and in measuring progress towards more robust models.

\section*{Acknowledgments}

Florian Tramèr's research was supported in part by the Swiss National Science Foundation (SNSF project P1SKP2\_178149).
Resources used in preparing this research were provided, in part, by the Province of Ontario, the Government of Canada through CIFAR, and companies sponsoring the Vector Institute \url{www.vectorinstitute.ai/partners}.

\bibliography{biblio}
\bibliographystyle{icml2020}

\clearpage
\appendix
\appendix

\section{Details about Model-agnostic Invariance-based Attacks}
\label{app:attacksMNIST}
Here, we give details about our model-agnostic invariance-based adversarial attacks on MNIST. 

\paragraph{Generating $\ell_0$-invariant adversarial examples.}
Assume we are given a training set $\mathcal{X}$ consisting of labeled example pairs $(\hat{x},\hat{y})$. As input our algorithm accepts an example $x$ with oracle label $\mathcal{O}(x) = y$. Image $x$ with label $y=8$ is given in Figure~\ref{fig:l0attack} (a).

Define $\mathcal{S} = \{\hat{x} : (\hat{x},\hat{y}) \in \mathcal{X}, \hat{x} \ne y\}$, the set of training examples with a different label. Now we define $\mathcal{T}$ to be the set of transformations that we allow: rotations by up to $20$ degrees, horizontal or vertical shifts by up to $6$ pixels (out of 28), shears by up to $20\%$, and re-sizing by up to $50\%$.

We generate a new augmented training set $\mathcal{X}^* = \{t(\hat{x}) : t \in \mathcal{T}, \hat{x} \in \mathcal{S}\}$. By assumption, each of these examples is labeled correctly by the oracle. In our experiments, we verify the validity of this assumption through a  human study and omit any candidate adversarial example that violates this assumption. Finally, we search for
\[x^* = \mathop{\text{arg min}}\limits_{x^* \in \mathcal{X}^*} \lVert{}x^* - \hat{x}\rVert{}_0. \]
By construction, we know that $x$ and $x^*$ are similar in pixel space but have a different label. Figure~\ref{fig:l0attack} (b-c) show this step of the process.
Next, we introduce a number of refinements to make $x^*$ be ``more similar'' to $x$. This reduces the $\ell_0$ distortion introduced to create an invariance-based adversarial example---compared to directly returning $x^*$ as the adversarial example. 

First, we define $\Delta = |x-x^*|>1/2$ where the absolute value and comparison operator are taken element-wise.  Intuitively, $\Delta$ represents the pixels that substantially change between $x^*$ and $x$. We choose $1/2$ as an arbitrary threshold representing how much a pixel changes before we consider the change ``important''. This step is shown in Figure~\ref{fig:l0attack} (d).
Along with $\Delta$ containing the \emph{useful} changes that are responsible for changing the oracle class label of $x$, it also contains irrelevant changes that are superficial and do not contribute to changing the oracle class label. For example, in Figure~\ref{fig:l0attack} (d) notice that the green cluster is the only semantically important change; both the red and blue changes are not necessary.

To identify and remove the superficial changes, we perform spectral clustering on $\Delta$. We compute $\Delta_i$ by enumerating all possible subsets of clusters of pixel regions. This gives us many possible \textbf{potential} adversarial examples $x^*_i = x+\Delta_i$. Notice these are only potential because we may not actually have applied the necessary change that actually modifies the class label. 

We show three of the eight possible candidates in Figure~\ref{fig:l0attack}.
In order to alleviate the need for human inspection of each candidate $x^*_i$ to determine which of these potential adversarial examples is actually misclassified, we follow an approach from Defense-GAN \cite{samangouei2018defense} and the Robust Manifold Defense \cite{ilyas2017robust}: we take the generator from a GAN and use it to assign a likelihood score to the image. We make one small refinement, and use an AC-GAN \cite{mirza2014conditional} and compute the class-conditional likelihood of this image occurring. This process reduces $\ell_0$ distortion by $50\%$ on average.

As a small refinement, we find that initially filtering $\mathcal{X}$ by removing the $20\%$ least-canonical examples makes the attack succeed more often.

\paragraph{Generating $\ell_\infty$-invariant adversarial examples.}
Our approach for generating $\ell_\infty$-invariant examples follows similar ideas as for the $\ell_0$ case, but is conceptually simpler as the perturbation budget can be applied independently for each pixel (our $\ell_\infty$ attack is however less effective than the $\ell_0$ one, so further optimizations may prove useful).

We build an augmented training set $\mathcal{X}^*$ as in the $\ell_0$ case. Instead of looking for the closest nearest neighbor for some example $x$ with label $\mathcal{O}(x) = y$, we restrict our search to examples $x^* \in \mathcal{X}^*$ with specific target labels $y^*$, which we've empirically found to produce more convincing examples (e.g., we always match digits representing a $1$, with a target digit representing either a $7$ or a $4$). We then simply apply an $\ell_\infty$-bounded perturbation to $x$ by interpolating with $x^*$, so as to minimize the distance between $x$ and the chosen target example $x^*$.

\newpage
\section{Complete Set of 100 Invariance Adversarial Examples}
\label{app:inv_examples}

Below we give the $100$ randomly-selected test images along with the invariance adversarial examples that were shown during the human study.

\subsection{Original Images}
\includegraphics[width=\columnwidth]{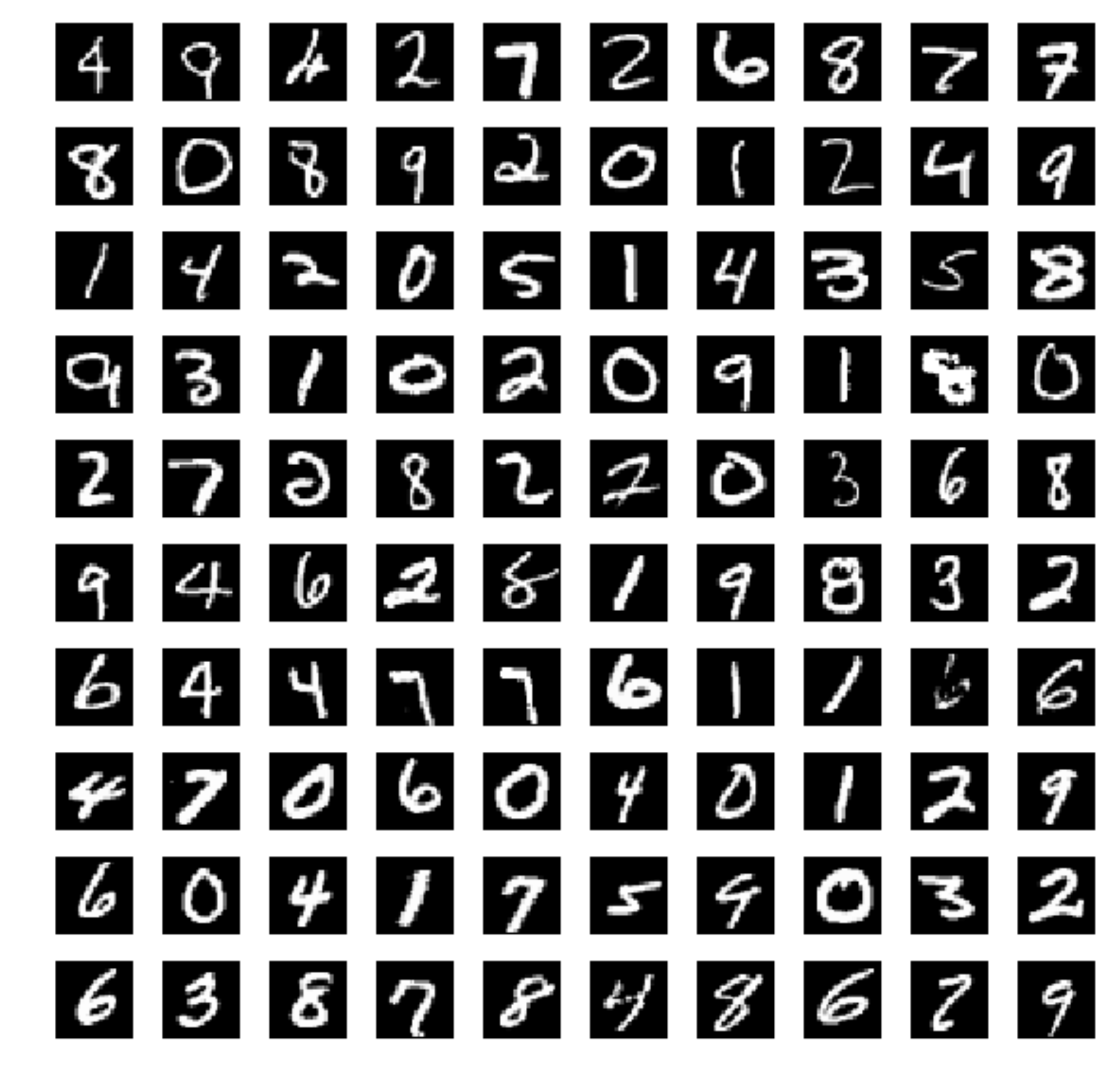}

\subsection{$\ell_0$ Invariance Adversarial Examples}
\includegraphics[width=\columnwidth]{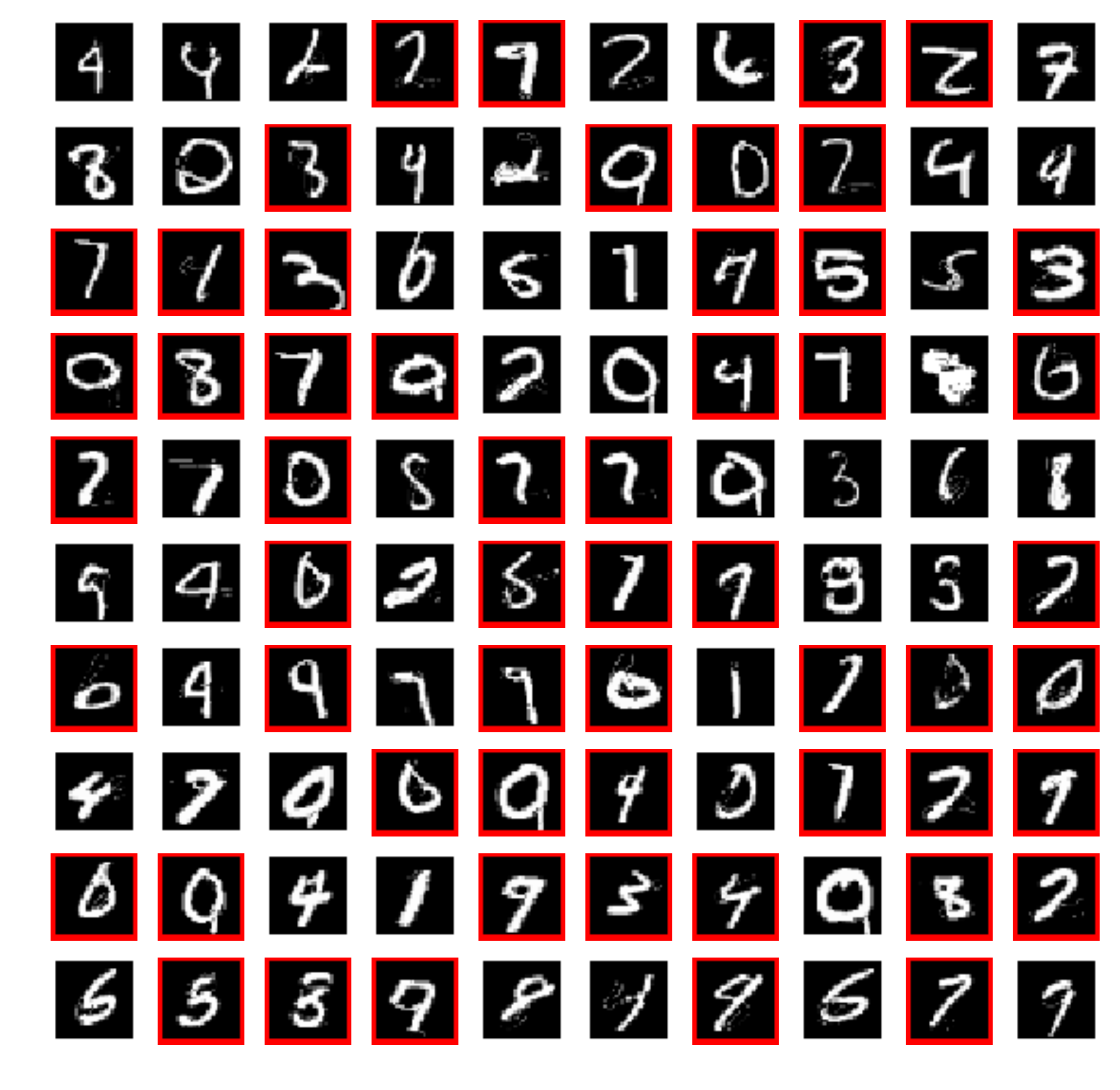}

\subsection{$\ell_\infty$ Invariance Adversarial Examples ($\varepsilon=0.3$)}
\includegraphics[width=\columnwidth]{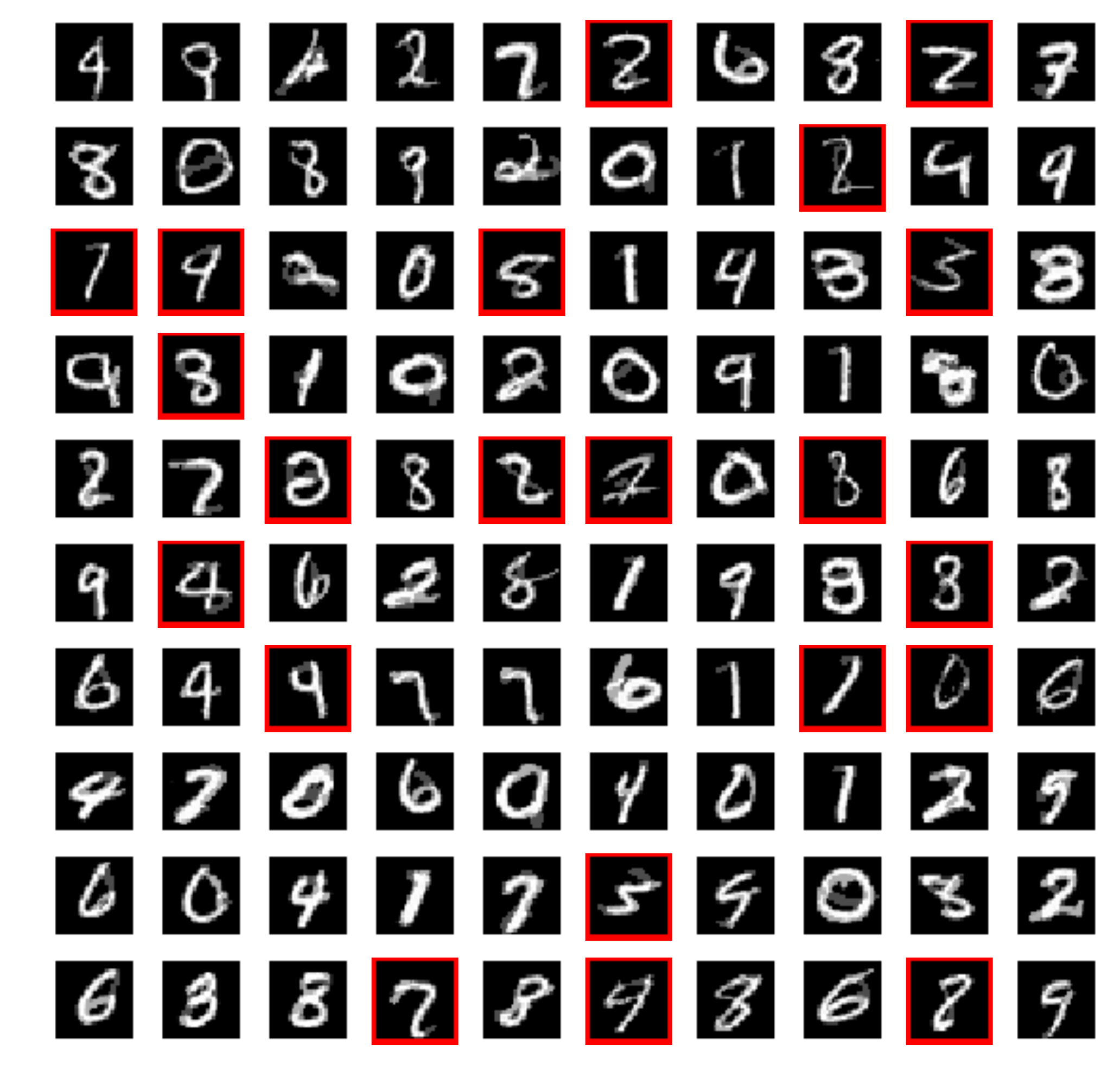}

\subsection{$\ell_\infty$ Invariance Adversarial Examples ($\varepsilon=0.4$)}
\includegraphics[width=\columnwidth]{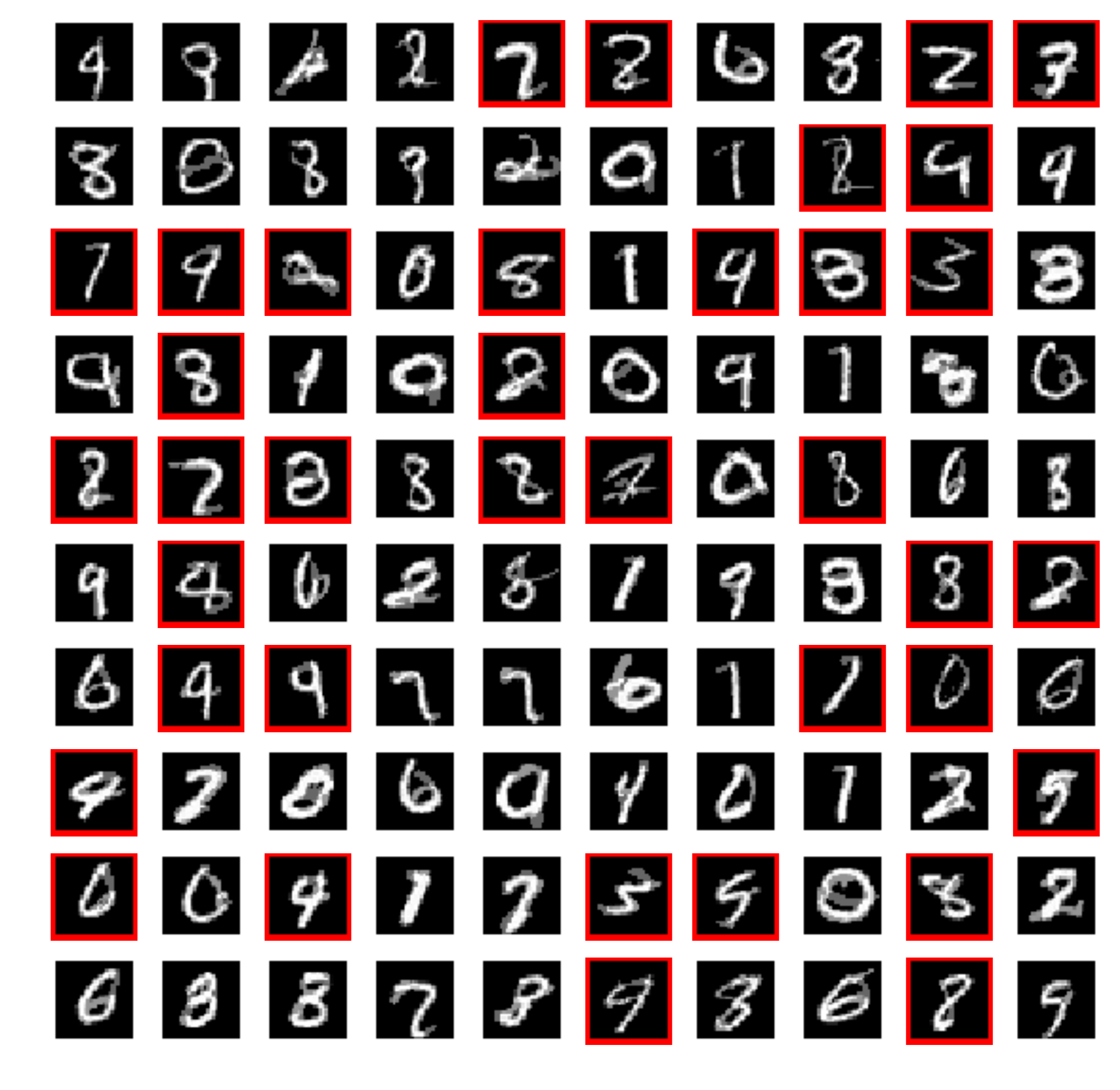}

\begin{figure*}[t]
\centering
\begin{tabular}{@{}l r r r r r r@{}}
    \toprule
    \multicolumn{7}{c}{Agreement between model and the original MNIST label, for sensitivity-based adversarial examples} \\
          \midrule
          \textbf{Model:} & \textbf{Undefended} &   \textbf{$\ell_0$ Sparse}  & \textbf{Binary-ABS} & \textbf{ABS}  & \textbf{$\ell_\infty$ PGD} ($\epsilon=0.3$) & \textbf{$\ell_2$ PGD} ($\epsilon=2$)   \\
         \midrule
         $\ell_0$ Attack ($\epsilon=25$) & 0\%  & 45\% & 63\% & 43\% & 0\% & 40\%\\
         \midrule
         $\ell_\infty$ Attack ($\epsilon=0.3$) & 0\% & 8\% & 77\% & 8\% & 92\% & 1\% \\
         $\ell_\infty$ Attack ($\epsilon=0.4$) & 0\% & 0\% & 60\% & 0\% & 7\% & 0\% \\
         \bottomrule
\end{tabular}
\vspace{-0.25em}
\captionof{table}{Robust model accuracy with respect to the original MNIST labels under different threat models. For $\ell_\infty$ attacks, we use PGD~\cite{madry2017towards}. For $\ell_0$ attacks, we use the PointwiseAttack of~\cite{schott2018towards}.}
\label{tab:modelrobustness}
\end{figure*}

\newpage

\section{Details on Trained Models}
\label{app:hyperparams}

In Section~\ref{sec:experiments}, we evaluate multiple models against invariance adversarial examples. Table~\ref{tab:modelaccuracy} gives results for models taken from prior work. We refer the reader to these works for details. The undefended model is a ResNet-18.

Table~\ref{tab:modelrobustness} reports the standard test accuracy of these models against sensitivity-based adversarial examples. That is, the model is considered correct if it classifiers the adversarial example with the original test-set label of the unperturbed input. To measure $\ell_0$ robustness, we use the PointwiseAttack of~\cite{schott2018towards} repeated $10$ times, with $\epsilon=25$. For $\ell_\infty$ robustness, we use PGD with $100$ iterations for $\epsilon=0.3$ and $\epsilon=0.4$. For the ABS and Binary-ABS models, we report the number from
~\cite{schott2018towards}, for PGD combined with stochastic gradient estimation.

\paragraph{Trading Perturbation-Robustness and Invariance Robustness.}
The adversarially-trained models in Figure~\ref{fig:pgd_invariance} use the same architecture as~\cite{madry2017towards}. We train each model for 10 epochs with Adam and a learning rate of $10^{-3}$ reduced to $10^{-4}$ after 5 epochs (with a batch size of 100). To accelerate convergence, we train against a weaker adversary in the first epoch (with $1/3$ of the perturbation
budget). For training, we use PGD with $40$ iterations for $\ell_\infty$ and $100$ iterations for $\ell_1$. 
%For evaluation, we use $100$ iterations for all attacks. 
For $\ell_\infty$-PGD, we choose a step-size of $2.5 \cdot \varepsilon/k$, where $k$ is the number of attack iterations. For the models trained with $\ell_1$-PGD, we use the Sparse $\ell_1$-Descent Attack of~\citet{tramer2019adversarial}, with a sparsity fraction of $99\%$.

Below, we report the robust accuracy of these models against sensitivity-based adversarial examples, in the sense of~\eqref{eq:robust_err}.

\begin{table}[h]
\centering
\begin{tabular}{@{} l r r r r @{}}
     & \multicolumn{4}{c}{$\epsilon$ for $\ell_\infty$-PGD training} \\
     \cmidrule{2-5}
     \textbf{Attack}& $0.1$ & $0.2$ & $0.3$ & $0.4$ \\
     \toprule
     PGD $\epsilon=0.3$ & 0\% & 6\% & 92\% & 93\% \\
     PGD $\epsilon=0.4$ & 0\% & 0\% & 7\% & 90\% \\
     \bottomrule
\end{tabular}
\caption{Robust model accuracy with respect to the original MNIST label for models trained against $\ell_\infty$ attacks.}
\end{table}

\begin{table}[h]
\centering
\begin{tabular}{@{} l r r r @{}}
     & \multicolumn{3}{c}{$\epsilon$ for $\ell_1$-PGD training} \\
     \cmidrule{2-4}
     \textbf{Attack}& $5$ & $10$ & $15$ \\
     \toprule
     $\ell_0$-PointwiseAttack ($\epsilon=25$) & 41\% & 59\% & 65\% \\
     \bottomrule
\end{tabular}
\caption{Robust model accuracy with respect to the original MNIST label for models trained against $\ell_1$ attacks, and evaluated against $\ell_0$ attacks.}
\end{table}

\paragraph{The Role of Data Augmentation.}
The models in Figure~\ref{fig:RT_eps_models} and Figure~\ref{fig:data_aug} are trained against an adversary that first rotates and translates an input (using the default parameters from~\cite{engstrom2019exploring}) and then adds noise of $\ell_\infty$-norm bounded by $\varepsilon$ to the transformed input. For training, we sample $10$ spatial transformations at random for each input, apply $40$ steps of $\ell_\infty$-PGD to each transformed input, and retain the strongest adversarial example. At test time, we enumerate all possible spatial transformations for each input, and apply $100$ steps of PGD to each.

When training against an adversary with $\varepsilon \geq 0.25$, a warm-start phase is required to ensure training converges. That is, we first trained a model against an $\varepsilon = 0.2$ adversary, and then successively increases $\varepsilon$ by $0.05$ every $5$ epochs.

\section{Proof of Lemma~\ref{lemma:nn}}
\label{apx:proof_nn}

We recall and prove Lemma~\ref{lemma:nn} from Section~\ref{sec:tradeoff}:
\begin{lemma*}
    Constructing an oracle-aligned distance function that satisfies Definition~\ref{def:align} is as hard
    as constructing a function $f$ so that $f(x) = \mathcal{O}(x)$, i.e., $f$ perfectly solves the oracle's classification task.
\end{lemma*}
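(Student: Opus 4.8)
The plan is to establish this equivalence by exhibiting two cheap reductions, the key one being exactly the one flagged in the remark: any distance function satisfying Definition~\ref{def:align} can be turned, essentially for free, into a classifier that agrees with $\mathcal{O}$, and conversely.

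\emph{Aligned distance $\Rightarrow$ perfect classifier.} I would first fix anchor inputs $a_1,\dots,a_C$ with $\mathcal{O}(a_c)=c$ for each class $c$; such a labeled set (one example per class) is trivially available from the support of $\mathcal{D}$, so possessing it does not make the construction problem any harder than ordinary classification. Given $\texttt{dist}$, define the nearest-anchor rule $f(x) \coloneqq \argmin_{c\in\{1,\dots,C\}} \texttt{dist}(x, a_c)$. To check that $f(x)=\mathcal{O}(x)$ whenever $\mathcal{O}(x)=y\in\{1,\dots,C\}$, apply Definition~\ref{def:align} to this $x$ with $x_1 = a_y$ (so $\mathcal{O}(x_1)=y$) and $x_2 = a_c$ for an arbitrary $c\neq y$ (so $\mathcal{O}(x_2)=c\neq y$): alignment gives $\texttt{dist}(x,a_y) < \texttt{dist}(x,a_c)$ for every $c\neq y$, so $y$ is the unique minimizer and $f(x)=y$. (If one prefers a $1$-nearest-neighbor classifier over the entire labeled set rather than over anchors, the identical argument shows the nearest neighbor of $x$ always carries label $\mathcal{O}(x)$.) Hence constructing $\texttt{dist}$ is at least as hard as constructing a perfect oracle classifier.

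\emph{Perfect classifier $\Rightarrow$ aligned distance.} Conversely, given $f$ with $f\equiv\mathcal{O}$, I would simply set $\texttt{dist}(x_1,x_2) \coloneqq \mathbf{1}[f(x_1)\neq f(x_2)]$, which is a legitimate distance measure in the sense of Definition~\ref{def:align} (no metric axioms are imposed there). For $x$ with $\mathcal{O}(x)=y$, any $x_1$ with $\mathcal{O}(x_1)=y$, and any $x_2$ with $\mathcal{O}(x_2)\neq y$, we get $\texttt{dist}(x,x_1)=\mathbf{1}[y\neq y]=0 < 1 = \mathbf{1}[f(x)\neq f(x_2)] = \texttt{dist}(x,x_2)$, so $\texttt{dist}$ is aligned. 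Both reductions make only $O(1)$ calls to the supplied object per query, so the two construction problems are equivalent up to negligible overhead, which is the claim.

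The step I expect to need the most care is the bookkeeping around the garbage class $\bot$. In the forward direction it is harmless, since $f$ outputs only labels in $\{1,\dots,C\}$ and we only assert agreement on inputs that $\mathcal{O}$ actually labels. In the reverse direction, the quantifier over $x_2$ in Definition~\ref{def:align} formally also covers inputs with $\mathcal{O}(x_2)=\bot$, for which $\mathbf{1}[f(x_1)\neq f(x_2)]$ need not equal $1$; I would handle this by letting the ``perfect classifier'' also report $\bot$ on un-labelable inputs (consistent with the codomain of $\mathcal{O}$), so that $f(x_2)=\bot\neq y=f(x)$ again forces $\texttt{dist}(x,x_2)=1$ and the argument closes.
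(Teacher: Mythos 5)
Your proposal is correct and takes essentially the same route as the paper: a nearest-anchor (1-NN with one representative per class) classifier built from the aligned distance in one direction, and the indicator distance $\mathbf{1}[f(x_1)\neq f(x_2)]$ in the other. Your forward direction is in fact a cleaner, direct application of Definition~\ref{def:align} (the paper detours through a sequence/limit argument before arriving at the same nearest-neighbor conclusion), and your explicit handling of the $\bot$ class matches the paper's implicit assumption that a perfect classifier reports $\bot$ on un-labelable inputs.
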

\begin{proof}
We first show that if we have a distance function $\texttt{dist}$ that satisfies Definition~\ref{def:align}, then the classification task can be perfectly solved.

Let $x$ be an input from class $y$ so that $\mathcal{O}(x) = y$.
Let $\{x_i\}$ be any (possibly infinite) sequence of inputs so that $\texttt{dist}(x, x_i) < \texttt{dist}(x, x_{i+1})$ but so
that $\mathcal{O}(x_i) = y$ for all $x_i$.
Define $l_x = \lim_{i \to \infty} \texttt{dist}(x, x_i)$ as the distance to the furthest input from this class along the path $x_i$.

Assume that $\mathcal{O}$ is not degenerate and there exists at least one input $z$ so that $\mathcal{O}(z) \ne y$. If the problem is degenerate then it is uninteresting: \emph{every} function $\texttt{dist}$ satisfies Definition~3.

Now let $\{z_i\}$ be any (possibly infinite) sequence of inputs so that $\texttt{dist}(x, z_i) > \texttt{dist}(x, z_{i+1})$ and
so that $\mathcal{O}(z_i) \ne y$.
Define $l_z = \lim_{i \to \infty} \texttt{dist}(x, z_i)$ as the distance to the closest input along $z$.
But by Definition~\ref{def:align} we are guaranteed that $l_z > l_x$, otherwise there would exist an index $I$
such that $\texttt{dist}(x, x_I) \geq \texttt{dist}(x, z_I)$ but so that $\mathcal{O}(x) = \mathcal{O}(x_I)$ 
and  $\mathcal{O}(x) \ne \mathcal{O}(z_I)$, contradicting Definition~3.
Therefore for any example $x$, \emph{all} examples $x_i$ that share the same class label
are closer than \emph{any} other input $z$ that has a different class label.

From here it is easy to see that the task can be solved trivially by a 1-nearest neighbor
classifier using this function $\texttt{dist}$.
Let $S = \{(\alpha_i, y_i)\}_{i=1}^C$ contain exactly one pair $(z,y)$ for every class.
Given an arbitrary query point $x$, we can therefore compute the class label as $\text{arg min}\ \texttt{dist}(x, \alpha_i)$,
which must be the correct label, because of the above argument: the closest example from any (incorrect)
class is different than the furthest example from the correct class, and so in particular, the
closest input from $S$ \emph{must} be the correct label.

For the reverse direction, assume we have a classifier $f(x)$ that solves the task perfectly, i.e., $f(x) = \mathcal{O}(x)$ for any $x \in \mathbb{R}^d$. Then the distance function defined as
\[
\texttt{dist}(x, x') = \begin{cases}
0 & \text{ if } f(x) = f(x') \\
1 & \text{otherwise}
\end{cases}
\]
is aligned with the oracle.
\end{proof}

\section{Proofs for the Overly-Robust Features Model}
\label{app:proofs}
We recall the binary classification task from Section~\ref{sec:robust_features}. Unlabeled inputs $x \in \mathbb{R}^{d+2}$ are sampled from some distribution $\mathcal{D}_k^*$ parametrized by $k>1$ as follows:
\begin{gather*}
z \stackrel{\text{u.a.r}}{\sim} \{-1, 1\}, 
\quad x_1 = z/2
\\
x_2 = \begin{cases}
+z & \text{w.p. } \frac{1 + 1/k}{2} \\
-z & \text{w.p. } \frac{1 - 1/k}{2}
\end{cases}
,\ x_3,  \dots, x_{d+2} \stackrel{\text{i.i.d}}{\sim} \mathcal{N}(\frac{z}{\sqrt{d}}, k) \;.
\end{gather*}
The oracle label for an input $x$ is $y = \mathcal{O}(x) = \texttt{sign}(x_1)$.
Note that for $k \gg 1$, features $x_2, \dots, x_{d+2}$ are only weakly correlated with the label $y$.
The oracle labels are robust to $\ell_\infty$-perturbations bounded by $\varepsilon=1/2$:
\begin{claim}
For any $x\sim \mathcal{D}^*$ and $\Delta \in \mathbb{R}^{d+2}$ with $\|\Delta\|_\infty < 1/2$, we have $\mathcal{O}(x) = \mathcal{O}(x+\Delta)$.
\end{claim}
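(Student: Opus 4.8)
The plan is to observe that the oracle only depends on the first coordinate and that the first coordinate of any point drawn from $\mathcal{D}^*_k$ lies exactly at distance $1/2$ from the decision threshold $0$. Concretely, by definition $\mathcal{O}(x) = \texttt{sign}(x_1)$, and for $x \sim \mathcal{D}^*_k$ we have $x_1 = z/2$ with $z \in \{-1,+1\}$, so $|x_1| = 1/2$ and in particular $x_1 \neq 0$.

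Next I would bound the perturbation of the first coordinate: since $\|\Delta\|_\infty < 1/2$, we have $|\Delta_1| < 1/2 = |x_1|$. Therefore $x_1 + \Delta_1$ cannot change sign relative to $x_1$: if $x_1 = 1/2$ then $x_1 + \Delta_1 > 0$, and if $x_1 = -1/2$ then $x_1 + \Delta_1 < 0$. In either case $\texttt{sign}(x_1 + \Delta_1) = \texttt{sign}(x_1)$.

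Finally I would conclude by noting that $(x+\Delta)_1 = x_1 + \Delta_1$, so $\mathcal{O}(x+\Delta) = \texttt{sign}((x+\Delta)_1) = \texttt{sign}(x_1) = \mathcal{O}(x)$, as claimed. The values of $x_2, \dots, x_{d+2}$ and the corresponding coordinates of $\Delta$ play no role, since the oracle ignores them.

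There is essentially no obstacle here: the only point worth stating carefully is that every point in the support of $\mathcal{D}^*_k$ has $|x_1| = 1/2$ \emph{exactly} (not just with high probability), so the strict inequality $\|\Delta\|_\infty < 1/2$ is exactly what is needed to keep $x_1 + \Delta_1$ on the same side of $0$. The inequality being strict (rather than $\leq$) is what makes the statement hold for \emph{every} such $\Delta$ rather than merely generically.
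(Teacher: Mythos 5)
Your proof is correct, and it is exactly the argument the claim rests on: the paper itself states this claim without proof precisely because the oracle depends only on $x_1$, every point in the support of $\mathcal{D}^*_k$ has $|x_1| = 1/2$, and a perturbation with $\|\Delta\|_\infty < 1/2$ moves the first coordinate by strictly less than $1/2$, so its sign is preserved. Your added remark about why the strict inequality is needed is accurate and a reasonable clarification.
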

Recall that we consider that a model is trained and evaluated on sanitized and labeled data from this distribution. In this data, the ``noise'' features $x_2, \dots, x_{d+2}$ are more strongly correlated with the oracle labels $y$, and there is a small amount of label noise attributed to mistakes in the data labeling process. 
Specifically, we let $\alpha>0$ and  $\delta> 0$ be small constants, and
define $\mathcal{D}$ as the following distribution: 
\begin{align*}
    x \sim \mathcal{D}^*_{1+\alpha}, 
    \quad y = \begin{cases}
    +\mathcal{O}(x) & \text{w.p. } 1-\delta \\
    -\mathcal{O}(x) & \text{w.p. } \delta
    \end{cases} \;.
\end{align*}

We first show that this sanitization introduces spurious weakly robust features. Standard models trained on $\mathcal{D}$ are thus vulnerable to sensitivity-based adversarial examples. 
\begin{lemma}
Let $f(x)$ be the Bayes optimal classifier on $\mathcal{D}$. Then $f$ agrees with the oracle $\mathcal{O}$ with probability at least $1-\delta$ over $\mathcal{D}$ but with $0\%$ probability against an $\ell_\infty$-adversary bounded by some $\varepsilon = O(d^{-1/2})$.
\end{lemma}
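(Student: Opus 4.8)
The plan is to follow the ``robustness versus accuracy'' separation template of \citet{tsipras2018robustness}. The guiding picture is this: although $x_1$ is fully predictive, by construction it is not exploitable by the learner, so the accuracy-maximizing classifier $f$ on the sanitized distribution $\mathcal{D}$ must route its confidence through the block $x_3,\dots,x_{d+2}$; each of these coordinates is almost uncorrelated with $z$, yet $\sum_{i=3}^{d+2} x_i$ carries a signal of magnitude $\Theta(\sqrt d)$, and it is exactly this reliance that an $\ell_\infty$ perturbation of size $O(d^{-1/2})$, smeared over all $d$ of these coordinates, can flip.

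\emph{Accuracy on $\mathcal{D}$.} First I would write $f$ as the posterior-mode rule $f(x)=\argmax_y \Pr_{\mathcal{D}}[Y=y\mid X=x]$. Since the label flip is applied to $\mathcal{O}(x)$ independently of $x$ and $Y$ is binary, $\Pr_{\mathcal{D}}[Y=y\mid X=x]$ is monotone in $\Pr_{\mathcal{D}}[z=y\mid X=x]$, so $f$ is also the Bayes rule for the latent $z$; in particular its accuracy against $Y$ is at least that of $\sign(x_2)$. Translating this back to agreement with $\mathcal{O}$ --- a short computation using only that the label noise is symmetric and binary --- gives $\Pr_{\mathcal{D}}[f(X)=\mathcal{O}(X)]\ge\tfrac12\big(1+\tfrac1{1+\alpha}\big)\ge 1-\delta$ (the relevant regime being the one where the $\delta$ label noise dominates the $\alpha$-gap of the robust feature $x_2$). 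I would note that if $x_1$ were exploitable, $f$ would instead equal $\sign(x_1)=\mathcal{O}$ and would be robust, so this step is precisely where the construction's ``$x_1$ is hard to learn'' technicality is used.

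\emph{The $O(d^{-1/2})$ attack.} Next I would compute the log-likelihood ratio of $z=+1$ versus $z=-1$ under $\mathcal{D}^*_{1+\alpha}$: the Gaussian block contributes $\tfrac{2}{(1+\alpha)\sqrt d}\sum_{i=3}^{d+2}x_i$, so on the region of $\mathbb{R}^{d+2}$ that the adversary can reach, $f$ coincides with $\sign(g)$ for an affine form $g(x)=\langle w,x\rangle+b$ with $w_3,\dots,w_{d+2}$ each of order $\Theta(d^{-1/2})$. For $(x,y)\sim\mathcal{D}$ with latent $z$ we have $\sum_{i=3}^{d+2}x_i\sim\mathcal{N}\big(z\sqrt d,(1+\alpha)d\big)$, so the Gaussian part of $g(x)$ is an $O(1)$-scale random variable with mean of sign $z$. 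Now set $\varepsilon=c\,d^{-1/2}$ for a large constant $c$ and let the adversary take $\Delta_i=-y\varepsilon$ for $3\le i\le d+2$ and $\Delta_i=0$ otherwise, so $\|\Delta\|_\infty=\varepsilon=O(d^{-1/2})$; this shifts $g$ by $-y\varepsilon\sum_{i=3}^{d+2}w_i=-y\cdot\Theta(c)$, which for $c$ large enough overrides both the $O(1)$ fluctuations of $g$ and the bounded contributions of the robust coordinates $x_1,x_2$. A Gaussian tail bound on $\sum_{i=3}^{d+2}x_i$ then gives $f(x+\Delta)=-y$ for all but a negligible fraction of $(x,y)\sim\mathcal{D}$, i.e.\ $f$ has (essentially) $0\%$ robust accuracy at radius $O(d^{-1/2})$.

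\emph{Main obstacle.} The delicate part is the step just before the attack: making rigorous that maximizing accuracy on the \emph{sanitized} $\mathcal{D}$ genuinely forces $f$ to put $\Theta(d^{-1/2})$ weight on every non-robust coordinate --- equivalently, that the enhanced post-sanitization correlations of $x_2,\dots,x_{d+2}$ together with the non-exploitability of $x_1$ really do preclude $f$ from collapsing onto a robust feature --- while at the same time keeping the clean-accuracy bound $1-\delta$ and the attack radius $c\,d^{-1/2}$ mutually compatible for the chosen $\alpha,\delta$. Once $f$ is pinned down as essentially a linear rule with $\Theta(d^{-1/2})$ weights on the noise block, what remains is routine Gaussian concentration.
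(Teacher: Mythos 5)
Your second half---deriving the $\Theta(d^{-1/2})$ per-coordinate weights on $x_3,\dots,x_{d+2}$ from the likelihood ratio, shifting each of those coordinates by $\mp\varepsilon$ with $\varepsilon=c\,d^{-1/2}$, and using Gaussian concentration to flip $\sign(w^\top x + C)$ with probability tending to one while leaving the oracle label unchanged---is essentially the paper's own argument (the paper simply asserts the linear form with $w_i=O(1/\sqrt{d})$ for $i\ge 3$ and notes the resulting $O(1)$ logit shift; your tail bound fills in the details). The genuine gap is in your first half. The construction does not make $x_1$ unavailable: $x_1=z/2$ is an ordinary coordinate of $x$, and the ``technicality'' in the paper is the label noise $\delta$ (making $x_1$ hard to learn is only mentioned as an alternative). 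By discarding $x_1$ and lower-bounding the Bayes classifier via $\sign(x_2)$, you can only certify agreement with $\mathcal{O}$ of $\tfrac12\bigl(1+\tfrac{1}{1+\alpha}\bigr)=1-\tfrac{\alpha}{2(1+\alpha)}$, and your final step ``$\ge 1-\delta$'' holds only under the unstated assumption $\delta\gtrsim\alpha/2$; the lemma treats $\alpha$ and $\delta$ as independent small constants. The paper's route needs no such relation: since $\Pr[y=\sign(x_1)\mid x]=1-\delta>1/2$ and $\mathcal{O}(x)=\sign(x_1)$, a classifier relying on $x_1$ already attains accuracy $1-\delta$, hence so does the Bayes optimal classifier, and its on-distribution prediction coincides with the oracle.

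Your flagged ``main obstacle''---rigorously forcing the accuracy-maximizing classifier to put $\Theta(d^{-1/2})$ weight on every noise coordinate---is real, but be aware the paper does not resolve it either: it asserts that the Bayes optimal classifier has the form $\sign(w^\top x+C)$ with $w_1,w_2$ finite constants, even though, taken literally, the posterior on $\mathcal{D}$ depends on $x$ only through $x_1$, so a strictly Bayes-optimal rule could ignore the noise block (the off-support extension that is attacked is a modeling choice, acknowledged in the footnote). So the repair is not to prove that step but to mirror the paper: establish the clean-agreement bound via $x_1$ as above, take the asserted linear form as the characterization of $f$, and then run your perturbation-plus-concentration argument, which is correct as written except for one detail: to drive agreement with the oracle to $0\%$ (not merely to $\delta$), key the perturbation sign to $z=\mathcal{O}(x)=\sign(x_1)$ rather than to the noisy label $y$.
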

\begin{proof}
The first part of the lemma, namely that $f$ agrees with the oracle $\mathcal{O}$ with probability at least $1-\delta$ follows from the fact that for $(x, y) \sim \mathcal{D}$, $\texttt{sign}(x_1)=y$ with probability $1-\delta$, and $\mathcal{O}(x) = \texttt{sign}(x_1)$. So a classifier that only relies on feature $x_1$ achieves $1-\delta$ accuracy.
To show that the Bayes optimal classifier for $\mathcal{D}$ has adversarial examples, note that this classifier is of the form 
\begin{align*}
f(x) &= \texttt{sign}(w^T x + C) \\
&= \texttt{sign}(w_1 \cdot x_1 + w_2 \cdot x_2 + \sum_{i=3}^{d+2} w_i \cdot x_i + C) \;,
\end{align*}
where $w_1, w_2, C$ are constants, and $w_i = O(1/\sqrt{d})$ for $i \geq 3$. Thus, a perturbation of size $O(1/\sqrt{d})$ applied to features $x_3, \dots, x_{d+2}$ results in a change of size $O(1)$ in $w^T x + C$, which can be made large enough to change the output of $f$ with arbitrarily large probability. As perturbations of size $O(1/\sqrt{d})$ cannot change the oracle's label, they can reduce the agreement between the classifier and oracle to $0\%$.
\end{proof}
Finally, we show that there exists an overly-robust classifier on $\mathcal{D}$ that is vulnerable to invariance adversarial examples:
\begin{lemma}
Let $f(x) = \texttt{sign}(x_2)$. This classifier has accuracy above $1-\alpha/2$ on $\mathcal{D}$, even against an $\ell_\infty$ adversary bounded by $\varepsilon=0.99$. Under such large perturbations, $f$ agrees with the oracle with probability $0\%$.
\end{lemma}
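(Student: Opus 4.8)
The plan is to exploit that $f(x)=\texttt{sign}(x_2)$ depends only on coordinate $x_2$, while the oracle $\mathcal{O}(x)=\texttt{sign}(x_1)$ depends only on coordinate $x_1=z/2$. Two elementary facts drive everything: (i) under $\mathcal{D}$ (i.e.\ with $k=1+\alpha$) the coordinate $x_2$ equals the latent sign $z$ with probability $\frac{1+1/(1+\alpha)}{2}=\frac{2+\alpha}{2(1+\alpha)}$, equals $-z$ otherwise, and always has magnitude exactly $1$; (ii) $x_1=z/2$ has magnitude exactly $1/2$, so a perturbation of $\ell_\infty$-norm $0.6$ suffices to flip $\mathcal{O}$ while no perturbation of $\ell_\infty$-norm $\le 0.99$ can flip $f$.

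For the accuracy statement, note $\mathcal{O}(x)=\texttt{sign}(z/2)=z$, and that the label $y$ equals $z$ with probability $1-\delta$ independently of $x_2$. Hence $f(x)\ne y$ exactly when either $x_2=z$ and the label was flipped, or $x_2=-z$ and it was not, giving
\[
\Pr_{(x,y)\sim\mathcal{D}}[f(x)\neq y]
= \frac{2+\alpha}{2(1+\alpha)}\,\delta + \frac{\alpha}{2(1+\alpha)}\,(1-\delta)
= \frac{\alpha+2\delta}{2(1+\alpha)}\,.
\]
This is strictly below $\alpha/2$ precisely when $2\delta<\alpha^2$ (in particular for all sufficiently small $\delta$). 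For the $\ell_\infty$-robustness at radius $\varepsilon=0.99$, since $|x_2|=1>0.99$ we get $\texttt{sign}(x_2+\Delta_2)=\texttt{sign}(x_2)$ for every $\Delta$ with $\|\Delta\|_\infty\le 0.99$, so $f$ is constant on the entire $0.99$-ball around $x$; therefore the robust error $\mathcal{L}_{0.99}(f)$ equals the clean error just computed, and the robust accuracy is above $1-\alpha/2$.

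For the invariance claim, fix any $x$ in the support of $\mathcal{D}$ and let $\Delta$ be the perturbation that adds $-0.6\cdot\texttt{sign}(x_1)$ to the first coordinate and $0$ everywhere else, so $\|\Delta\|_\infty=0.6\le 0.99$. Then $x_1+\Delta_1=z/2-0.6\,z=-0.1\,z$, hence $\mathcal{O}(x+\Delta)=\texttt{sign}(-0.1\,z)=-z\ne \mathcal{O}(x)$ (and $\mathcal{O}$ never returns $\bot$), while $f(x+\Delta)=\texttt{sign}(x_2)=f(x)$. Thus $x+\Delta$ is a bona fide $0.99$-bounded invariance adversarial example for $f$ at \emph{every} input $x$, so under a worst-case $\ell_\infty$-perturbation of norm $\le 0.99$ the classifier disagrees with the (perturbed) oracle on every input, i.e.\ agrees with it with probability $0$. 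The only step that is not purely mechanical is matching the exact constant $1-\alpha/2$ against the label-noise level $\delta$; the displayed identity above makes the required relation explicit, and one would simply invoke the standing assumption that $\delta$ is small enough (e.g.\ $2\delta<\alpha^2$).
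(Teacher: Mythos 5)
Your proof is correct and follows essentially the same route as the paper's: the fact that $|x_2|=1$ makes $f$ constant on every $0.99$-ball in $\ell_\infty$, the probability that $x_2$ matches the label gives the accuracy bound, and a perturbation of the first coordinate of size between $1/2$ and $0.99$ flips the oracle while leaving $f$ unchanged, so every input admits an invariance adversarial example. Your explicit accounting of the label noise (the condition $2\delta<\alpha^2$ for the error $\frac{\alpha+2\delta}{2(1+\alpha)}$ to fall below $\alpha/2$) is in fact slightly more careful than the paper's proof, which tacitly absorbs $\delta$ into the standing assumption that it is a small constant.
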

\begin{proof}
The robust accuracy of $f$ follows from the fact that $f(x)$ cannot be changed by any perturbation of $\ell_\infty$ norm strictly below $1$, and that for $(x, y) \sim \mathcal{D}$, we have $x_2 = y$ with probability $\frac{1+1/(1+\alpha)}{2} \geq 1 - \alpha/2$.
For any $(x, y) \sim \mathcal{D}$, note that a perturbation of $\ell_\infty$-norm above $1/2$ can always flip the oracle's label. So we can always find a perturbation $\Delta$ such that $\|\Delta\|_\infty \leq 0.99$ and $f(x+\Delta) \neq \mathcal{O}(x+\Delta)$.
\end{proof}

\end{document}